  \providecommand\BibTeX{{%
    \normalfont B\kern-0.5em{\scshape i\kern-0.25em b}\kern-0.8em\TeX}}}
\definecolor{mycustomcolor}{rgb}{0.800, 0.400, 0.000}
\definecolor{mycommentcolor}{rgb}{0.0, 0.5, 0.0}
\definecolor{mygreen}{HTML}{2ca25f}
\newcommand{\cunxi}[1]{\textcolor{green}}
\newcommand{\name}{\texttt{e-boost}\xspace}
\begin{document}

\title{\texttt{\bf e-boost}: Boosted E-Graph Extraction with Adaptive Heuristics and Exact Solving}

\newcommand{\red}[1]{\textcolor{red}{#1}}
\newtheorem*{theorem}{Theorem}

\newcommand{\zhan}[1]{\textcolor{blue}{[\small zhan: ~#1]}}

\author{\IEEEauthorblockN{Jiaqi Yin$^{*}$, Zhan Song$^{*}$, Chen Chen$^{*}$, Yaohui Cai$^{\dagger}$, Zhiru Zhang$^{\dagger}$, Cunxi Yu$^{*}$}
\IEEEauthorblockA{
$^{*}$University of Maryland, College Park, MD, US \quad 
$^{\dagger}$Cornell University, Ithaca, NY, US \\
{\{jyin629,cunxiyu\}@umd.edu}}
}



\maketitle

\begin{abstract}

E-graphs have attracted growing interest in many fields, particularly in logic synthesis and formal verification. E-graph extraction is a challenging NP-hard combinatorial optimization problem. It requires identifying optimal terms from exponentially many equivalent expressions, serving as the primary performance bottleneck in e-graph based optimization tasks. However, traditional extraction methods face a critical trade-off: heuristic approaches offer speed but sacrifice optimality, while exact methods provide optimal solutions but face prohibitive computational costs on practical problems. We present \name, a novel framework that bridges this gap through three key innovations: (1) parallelized heuristic extraction that leverages weak data dependence to compute DAG costs concurrently, enabling efficient multi-threaded performance without sacrificing extraction quality; (2) adaptive search space pruning that employs a parameterized threshold mechanism to retain only promising candidates, dramatically reducing the solution space while preserving near-optimal solutions; and (3) initialized exact solving that formulates the reduced problem as an Integer Linear Program with warm-start capabilities, guiding solvers toward high-quality solutions faster.

Across the diverse benchmarks in formal verification and logic synthesis fields, \name demonstrates $558\times$ runtime speedup over traditional exact approaches (ILP) and 19.04\% performance improvement over the state-of-the-art extraction framework (SmoothE). In realistic logic synthesis tasks, \name produces 7.6\% and 8.1\% area improvements compared to conventional synthesis tools with two different technology mapping libraries. \name is available at \url{https://github.com/Yu-Maryland/e-boost}.

\end{abstract}

\section{Introduction}


{Equality saturation, an emerging optimization approach, has gained increasing interest in Electronic Design Automation (EDA), particularly for tackling complex problems in logic synthesis \cite{ustun2022impress,chen2025emorphicscalableequalitysaturation, ustun2023equality} and formal verification \cite{coward2023datapath,yin2025boole,yin2025hec}. This technique explores vast spaces of equivalent expressions by iteratively applying rewrite rules within an e-graph data structure \cite{tate2009equality}. Beyond EDA, its effectiveness is also recognized in fields such as compiler optimization\cite{cheng2023seer} and theorem proving \cite{detlefs2005simplify}. However, e-graph extraction, a critical step in this process, poses an \textit{NP-hard} combinatorial optimization challenge. This challenge arises from the need to select optimal terms from a potentially exponential number of equivalent expressions within e-classes, particularly when employing realistic DAG-based cost \cite{mishchenko2006dag,li2024dag,yu2016dag} models that capture shared subexpressions and introduce a trade-off between solution quality and computational efficiency.}


Several approaches have been developed to tackle the {e-graph extraction challenge}.
Exact methods, often formulated as Integer Linear Programs (ILP) \cite{yang2021equality}, can guarantee optimal solutions but suffer from significant scalability issues, frequently exceeding realistic time limits on larger problem instances. Heuristic techniques \cite{panchekha2015automatically}, such as greedy algorithms, provide a faster alternative but sacrifice optimality, potentially leading to suboptimal results due to their localized decision-making. {More recently, Cai et al. proposed a novel differentiable approach and introduced SmoothE \cite{cai2025smoothe}. SmoothE translates the discrete problem into a continuous optimization search space and optimizes e-graph extraction by gradient descent. However, the differentiable methods can struggle with convergence speed on complex e-graphs, risk getting stuck in local optima, and are highly dependent on GPU acceleration for practical implementation.}

To address these challenges, we introduce \name, a novel framework designed for efficient and high-quality e-graph extraction. Taking a saturated e-graph as input, \name generates optimized extraction results, aiming to provide a scalable solution that delivers near-optimal quality. The core idea behind \name is to bridge the gap between fast heuristics and optimal exact solvers. It leverages the speed of heuristic methods to prune the vast search space and provide a high-quality warm start for a subsequent exact solving phase. This combination allows \name to focus the power of exact methods on a dramatically reduced, more tractable search space, thereby achieving significantly improved performance and scalability without substantial compromises in solution quality.

The main contributions of this paper are summarized as follows:
\begin{itemize}
    \item {\textbf{A hybrid heuristic-exact extraction framework -- \name}: We introduce \name, a novel framework designed to harness the complementary strengths of heuristic search and exact optimization. By employing adaptive heuristics for rapid, large-scale search space reduction and warm-start initialization, and then deploying exact solvers (ILP) for precise optimization within the pruned space, \name achieves a powerful combination of methods. This results in substantial runtime improvements while maintaining near-optimal solution quality. The \name framework operates fully automatically without manual tuning, and is available as open-source.}
    \item {\textbf{Accelerated heuristic pruning via parallelization}: To enhance the \name framework, we introduce a parallelized heuristic algorithm specifically designed to speed up the critical search space pruning and warm-start generation steps. By leveraging multi-threading on weak data dependencies, this approach achieves substantial runtime reductions for the heuristic phase compared to a sequential execution, without compromising the quality of the results.}
    \item {\textbf{Evaluation on diverse benchmarks to demonstrate \name performance and scalability:} \name demonstrates significant advantages: it achieves $19.04\%$ higher performance (with $138\times$ speedup) compared to the state-of-the-art (SOTA) differentiable framework SmoothE\cite{cai2025smoothe}, and obtains 5.6\% better quality ($558\times$ speedup) compared to exact methods via \textbf{SOTA commercial solvers} such as Gurobi\cite{gurobi_2025}, IBM CPLEX\cite{ibm_ilog_cplex_optimization_studio_2025}, and Google CP-SAT\cite{google_cp_sat_2024}.
    }
    \item {\textbf{Our approach brings realistic benefits across various domains:}
    To demonstrate the practical impact of our approach, we use logic synthesis as a key example. Integrating \name into E-Syn\cite{chen2024syn}, an e-graph based logic synthesis tool, yields significant benefits, reducing circuit area by 7.6\% and 8.1\% when targeting the ASAP 7nm\cite{clark2016asap7} and Skywater 130nm\cite{gdsfactory_skywater130_2025} technology libraries, respectively, compared to the baseline flow.}
\end{itemize}
\section{Preliminary}

\subsection{E-graph}
An \textbf{e-graph} (or equivalence graph) serves as a data structure specifically designed to efficiently represent congruence relations among expressions \cite{tate2009equality,egglog,willsey2021egg}. Formally, an e-graph $\mathcal{E}$ is defined as a tuple $\mathcal{E} = (N, C, \lambda)$, where:
\begin{itemize}
    \item $N$ is a finite set of \textbf{e-nodes}, each representing an expression or sub-expression with a specific operator and children e-classes.
    \item $C$ is a set of \textbf{e-classes}, where each e-class contains e-nodes that are considered equivalent.
    \item $\lambda: N \rightarrow \Sigma \times C^*$ is a function mapping each e-node to:
    \begin{itemize}
        \item An operator symbol from the set $\Sigma$
        \item A sequence of children e-classes $C^*$ that function as the operator arguments
    \end{itemize}
\end{itemize}


To illustrate this concept, consider Figure~\ref{fig:motivations}, which shows an e-graph representation of the expression $\neg a \land (a \lor \neg b)$. In this figure, each node represents an \textbf{e-node} with a specific operator and operands, and its index is shown as a superscript. In this paper, we use $E_k$ to denote the e-node with index $k$. The dotted boxes represent \textbf{e-classes}, which group e-nodes that are considered equivalent. For example, $a$ ($E_6$) and $\neg(\neg a)$ ($E_5$) are grouped into the same e-class because they are logically equivalent. This compact representation allows a single e-graph to represent multiple equivalent expressions simultaneously. For instance, $E_2$ represents the group of equivalent expressions $a \lor \neg b$, $\neg(\neg a) \lor \neg b$, $a \lor \neg(\neg(\neg b))$, and $\neg(\neg a) \lor \neg(\neg(\neg b))$. E-graph extraction refers to the process of selecting the optimal expression from this rich set of equivalent representations according to different cost functions.

\subsection{Equality Saturation}
\textbf{Equality saturation} \cite{tate2009equality, egglog,willsey2021egg} is an optimization technique that uses e-graphs to explore equivalent expressions systematically. The process begins by creating an initial e-graph containing the input expression. During the saturation phase, the system iteratively applies rewrite rules to discover new equivalent expressions using e-matching to find patterns modulo equality. When a match is found, the corresponding transformation is added to the e-graph, creating new equivalences. This process continues until either no new equivalences can be added (a fixed point is reached) or a resource limit is reached. Finally, in the extraction phase, the system selects the "best" expression from the saturated e-graph according to a cost function. \textbf{This extraction phase is pivotal, as the selection directly determines the resulting expression and thus dictates the ultimate quality achieved by the entire equality saturation process, making it a primary focus of this work.} 
The \textbf{egg} framework\cite{willsey2021egg} implements efficient e-graphs and e-matching for equality saturation. Since its release, equality saturation has sparked new research interest across multiple domains, including compiler optimization \cite{tate2009equality}, hardware design automation \cite{ustun2022impress,coward2022automatic}, theorem proving \cite{detlefs2005simplify,de2008z3}, and various other applications \cite{yang2021equality,cao2023babble,vanhattum2021vectorization,yin2025hec,Smith_2021,wu2023gamora,deng2024less, yin2025boole}.

\subsection{Combinatorial nature of e-graph extraction}

Combinatorial optimization involves finding optimal solutions from a finite set of possibilities where exhaustive search is impractical. These problems typically seek to minimize or maximize objective functions under constraints, with solutions represented as discrete structures like permutations, assignments, or graphs. 
In computer science, these problems underpin critical applications including compiler optimization\cite{lozano2019combinatorial} and high-level synthesis scheduling\cite{lozano2019combinatorial,yin2023respect,yin2023accelerating,liu2024differentiable}. In electronic design automation, combinatorial optimization drives placement and routing, and logic synthesis \cite{malik1990combinational}. 


E-graph extraction \cite{stepp2011equality,goharshady2024fast} is a particularly challenging combinatorial optimization problem. After equality saturation generates the e-graph with equivalent terms, extraction identifies the optimal term from exponentially many expressions for each e-class according to a cost function. The primary extraction cost models include tree costs, and DAG costs \cite{flatt2022small}. While tree-based extraction offers the advantage of polynomial-time complexity, allowing for faster computation, it sacrifices accuracy in reflecting realistic implementation costs. This inaccuracy arises from the tree cost model counts shared subexpressions multiple times, potentially leading to suboptimal choices in practice. For example, in technology mapping, tree costs significantly overestimate actual circuit costs by counting shared logic gates multiple times, resulting in suboptimal designs. On the other hand, DAG cost models address this issue by counting each unique subexpression exactly once, regardless of how many times it is used. Consequently, DAG cost models offer a more accurate representation of actual costs in scenarios like logic synthesis or formal verification. However, DAG-based extraction is intrinsically \textit{NP-hard} \cite{zhang_egraph_extraction_2023}, creating a challenging combinatorial optimization problem that has limited the adoption of equality saturation in practice. Our work focuses on developing an efficient system for this more realistic but computationally challenging DAG-based extraction problem.

Traditional e-graph extraction approaches primarily rely on Integer Linear Programming (ILP) \cite{yang2021equality} and various heuristics \cite{panchekha2015automatically} to select optimal terms. ILP formulations provide optimal solutions but scale poorly with e-graph size, becoming prohibitively expensive for practical applications. Meanwhile, heuristic approaches like greedy traversal offer better performance but frequently produce suboptimal results, especially when faced with complex sharing patterns. {Recently, Cai et al. \cite{cai2025smoothe} introduced SmoothE, a differentiable approach that converts discrete e-node selection into an optimization problem over continuous probabilistic variables, enabling the use of GPU-accelerated gradient descent. However, this approach presents challenges: convergence often requires more iterations for large, complex e-graphs, the process is prone to getting trapped in local optima, and reliance on specialized computational resources (GPUs) for practical implementation.} All of these methods struggle to balance optimality with computational efficiency. This fundamental trade-off has significantly limited the practical adoption of equality saturation in real-world applications.


\section{Motivating Example}

\label{sec:motivating}


\begin{figure}[!htb]
    \centering
    \includegraphics[width=0.9\linewidth]{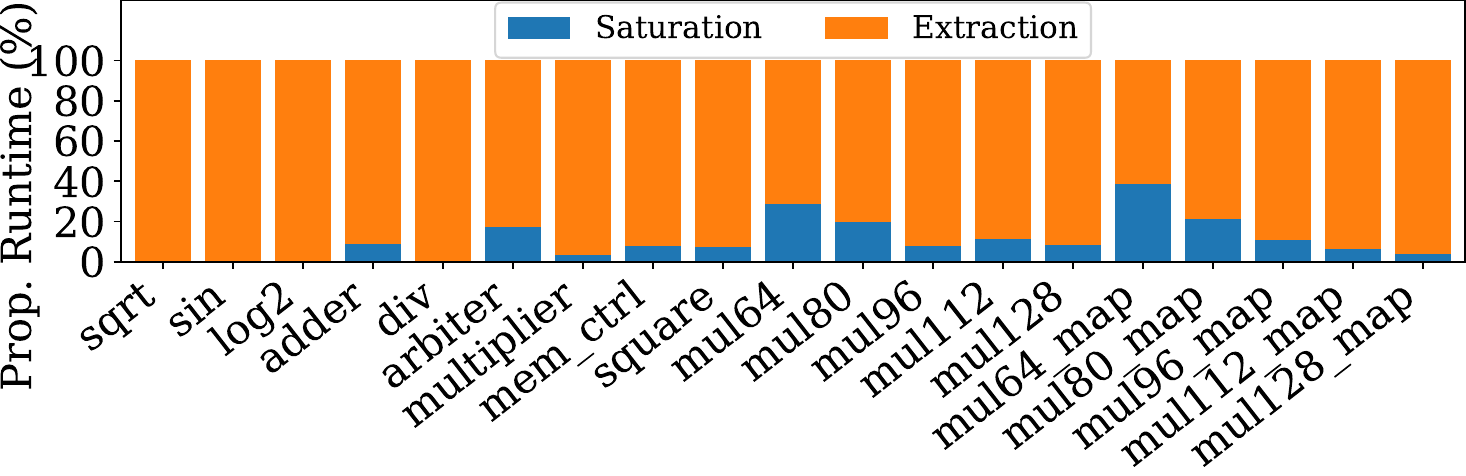}
    \vspace{-1mm}
    \caption{Normalized runtime proportions: saturation vs. extraction 
    }
    \label{fig:prop_runtime}
    \vspace{-3mm}
\end{figure}

\begin{figure*}[htbp]
    \centering
    \begin{subfigure}[t]{0.32\linewidth}
        \centering
        \includegraphics[width=\textwidth]{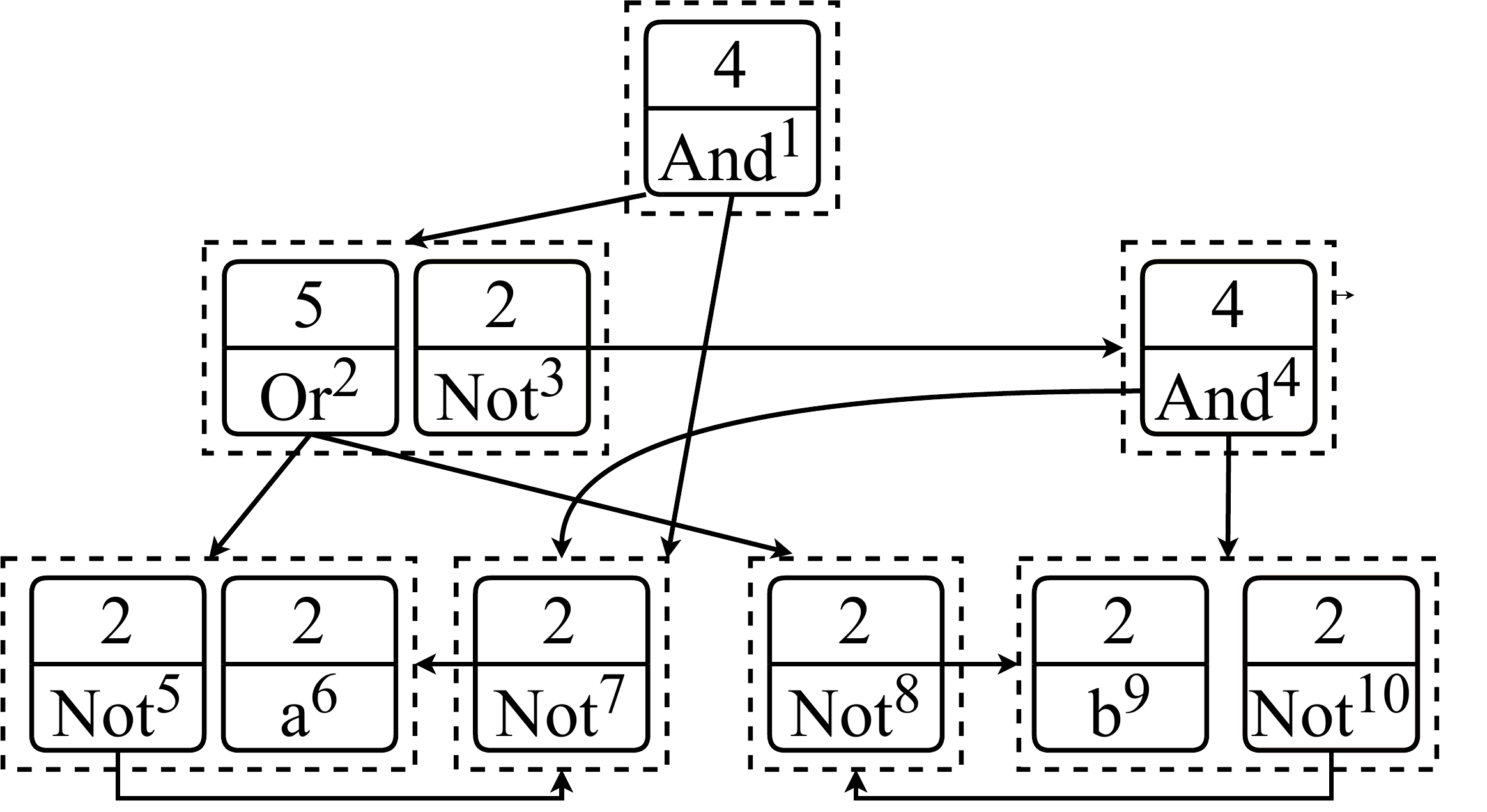}
        \caption{Initial e-graph representation of $\lnot a \wedge (a \vee \lnot b)$. Each e-node is annotated with its node cost (displayed within the node) and its index (shown in superscript). Primary inputs $a$ and $b$ are assigned a node cost of 2.}
        \label{fig:motivation_1}
    \end{subfigure}%
    \hspace{0.25em}
    \begin{subfigure}[t]{0.32\linewidth}
        \centering
        \includegraphics[width=\textwidth]{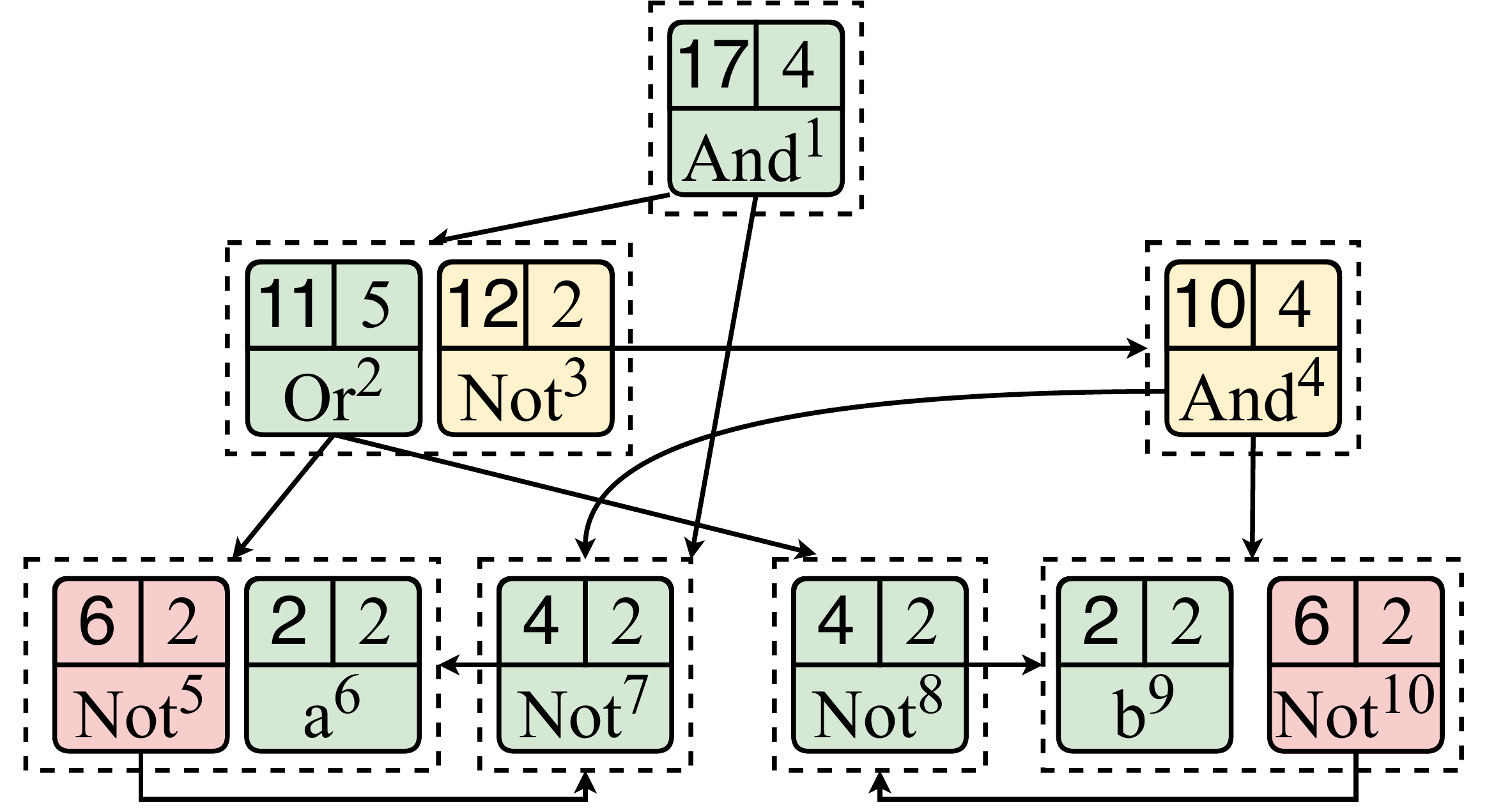}
        \caption{Heuristic e-graph extraction solution yielding a total DAG cost of 17. Green e-nodes indicate the selected nodes from the heuristic algorithm. For each e-node, the corresponding DAG cost appears in the upper left corner, capturing the sharing e-nodes.
        }
        \label{fig:motivation_2}
    \end{subfigure}%
    \hspace{0.25em}
    \begin{subfigure}[t]{0.32\linewidth}
        \centering
        \includegraphics[width=\textwidth]{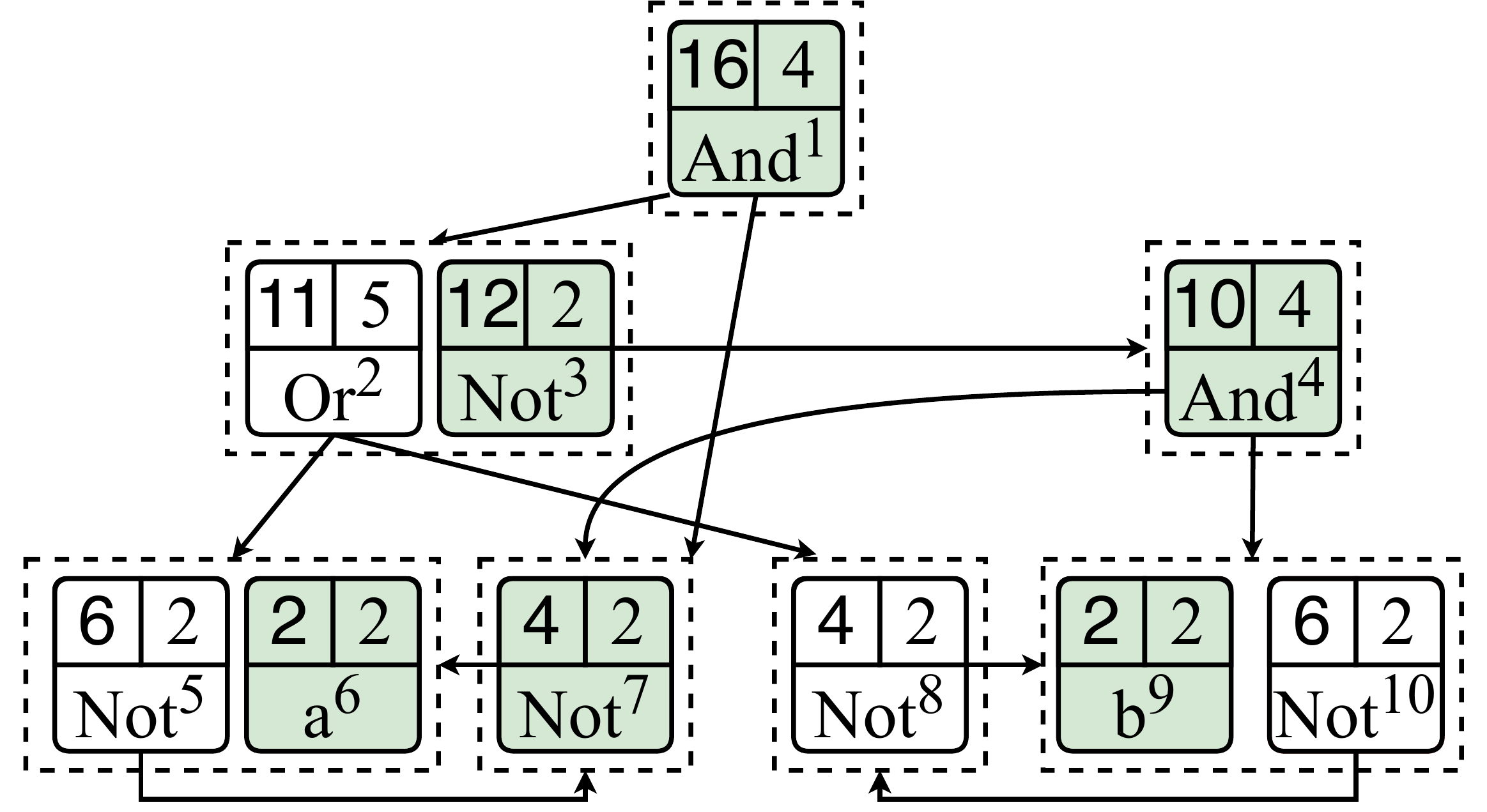}
        \caption{Final \name extraction results achieving an optimal DAG cost of 16. This improvement over the heuristic solution is enabled by the strategic sharing of e-node $\lnot a$, which allows the selection of $E_3$ despite its higher node cost compared to $E_2$.}
        \label{fig:motivation_3}
    \end{subfigure}%

    \caption{E-graph extraction example with \name. The e-graph representation for the expression $\lnot a \wedge (a \vee \lnot b)$ with two rewriting rules: double negation $(x \Leftrightarrow \lnot(\lnot x))$ and De Morgan's laws $((a \wedge b) \Leftrightarrow \lnot(\lnot a \vee \lnot b))$. Dotted boxes encapsulate e-classes, while arrows indicate parent-child relationships between e-nodes and their children e-classes. The progression from left to right demonstrates how our approach improves upon the heuristic solution to chase the optimal extraction.
    }
    \vspace{-3mm}
    \label{fig:motivations}
\end{figure*}

\textbf{Extraction overhead dominates e-graph optimization} -- In the e-graph optimization workflow, saturation expands the graph by applying equivalence rules, while extraction solves an \textit{NP-hard} combinatorial problem to select the optimal rewrite expression and thus determines final solution quality. Figure \ref{fig:prop_runtime} shows the proportional runtime for saturation (blue) and the DAG-based extraction (orange) phases across a representative set of large E-Syn\cite{chen2024syn} and BoolE\cite{yin2025boole} benchmarks in logic synthesis domain. It reveals that extraction consistently dominates end-to-end execution runtime, accounting for an average of 89.30\% of total runtime across all tested cases. These results highlight the challenges in current extraction techniques and underscore the urgent need for an efficient approach that can reduce extraction runtime without degrading optimization quality.

\textbf{Limitation} -- We use Figure ~\ref{fig:motivations} to illustrate the traditional heuristic algorithm and its limitations. Figure ~\ref{fig:motivation_1} depicts an e-graph representing the expression $\lnot a \wedge (a \vee \lnot b)$. Each e-node is assigned a cost number, and the cost value is labeled inside the node. This e-graph is saturated with two rewriting rules: double negation ($x \Leftrightarrow \lnot(\lnot x)$) and De Morgan's laws ($(a \wedge b) \Leftrightarrow \lnot(\lnot a \vee \lnot b)$). The selection of e-nodes significantly influences the total cost of the e-graph. 

The heuristic approach employs a greedy algorithm that selects the e-node with the lowest DAG cost from each e-class, recursively calculating costs from bottom to top. The DAG cost for an e-node is computed by summing the established DAG costs for its child e-classes and adding the intrinsic cost of the e-node itself. For example, $E_1$ has two child e-classes that select $E_2$ (DAG cost 11) and $E_7$ (DAG cost 4) respectively. Thus, the DAG cost of $E_1$ is $Cost_1 = 4$ (node cost of $E_1$) $+ 11$ (DAG cost of $E_2$) $+ 4$ (DAG cost of $E_7$) $ - 2$ ($E_6$ is a shared subexpression between $E_2$ and $E_7$) $= 17$. Figure ~\ref{fig:motivation_2} highlights the heuristic solution in green, yielding a total DAG cost of 17. The greedy search process is demonstrated in Algorithm \ref{alg:extraction}.

\begin{figure}[ht]
    \begin{minipage}[t]{0.48\textwidth}
        \begin{algorithm}[H]
        \caption{Greedy DAG-based E-graph Extraction}
        \label{alg:extraction}
        \scriptsize
        \begin{algorithmic}[1]
            
            
            
            
            
            
        \Procedure{CalculateCostSet}{\textit{egraph}, \textit{node\_id}, \textit{costs}}
            \State $\textit{node} \gets \textit{egraph}[\textit{node\_id}]$
            
            \If{$\textit{node}$ is leaf} 
                \State \Return $(\{\textit{node}.\textit{classid} : \textit{node}.\textit{cost}\}, \textit{node}.\textit{cost}, \textit{node\_id})$ 
            \EndIf
            
            \State $\textit{children\_classes} \gets \text{unique classes of } \textit{node}.\textit{children}$
            \State $\textit{result}[\textit{node}.\textit{classid}] \gets \text{Merge cost sets from all children classes}$
            \State $\textit{result\_cost} \gets (\text{cycle detected ? } \infty : \text{sum of all costs in } \textit{result})$ 
            
            \State \Return $(\textit{result}, \textit{result\_cost}, \textit{node\_id})$ 
        \EndProcedure
        \vspace{\baselineskip} 
    
        \Procedure{Extract}{\textit{egraph}}
            \State $\textit{pending} \gets \textit{ENodes}(G_e)$ \Comment{Initializing the Queue}
            \State $\textit{C\_Costs} \gets \{\}$ \Comment{Mapping ClassId to selected CostSet}
            \State $\textit{N\_Costs} \gets \{\}$  \Comment{Tracking node cost for space pruning}
            
            \While{$\textit{node\_id} \gets \textit{pending}.\textit{pop}()$}
                \If{$\forall c \in \textit{egraph}[\textit{node\_id}].\textit{children}() : c \in \textit{C\_Costs}$} 
                    \State $\textit{cost\_set} \gets \textit{CalculateCostSet}(\textit{egraph}, \textit{node\_id}, \textit{C\_Costs})$
                    \If{$\textit{cost\_set}.\textit{second} < \text{previous best cost in } \textit{C\_Costs}$}
                        \State Update $\textit{C\_Costs}$ and insert $\textit{cost\_set}$ for $\textit{egraph}[\textit{node\_id}].\textit{class}$ 
                    \EndIf
                    \If{$\textit{cost\_set}.\textit{second} < \text{previous best cost in } \textit{N\_Costs}$}
                         \State Update $\textit{N\_Costs}.\textit{second}$ and insert $\textit{cost\_set}$ for $\textit{node\_id}$ 
                    \EndIf
                \EndIf
            \EndWhile
            
            \State \Return extraction result based on optimal choices
        \EndProcedure
        \end{algorithmic}
        \end{algorithm}
    \end{minipage}
    \hfill
    \begin{minipage}[t]{0.48\textwidth}
        \begin{algorithm}[H]
        \caption{{Parallelized Extraction} 
        }
        \label{alg:paralleled_extraction}
        \scriptsize
        \begin{algorithmic}[1]

    \Procedure{Extract\_Parallelized}{\textit{egraph}}
        \State Initialize $\textit{pending}, \textit{C\_Costs}$ and $\textit{N\_Costs}$
        
        \While{$\textit{vec\_node\_id} \gets \textit{pending}.\textit{pop\_batch}()$\Comment{Pop a batch of elements}}
            \State $\textit{Inserted} \gets \{\}$ 
            \For {$\textit{node\_id} \in \textit{vec\_node\_id}$} \Comment{Parallel computation for all elements}
                \If{$\forall c \in \textit{egraph[node\_id]}.\textit{children}() : c \in \textit{C\_Costs}$}
                    \State $\textit{cost\_set} \gets \textit{CalculateCostSet}(\textit{egraph}, \textit{node\_id}, \textit{C\_Costs})$
                    \If{$\textit{cost\_set}.\textit{second} < \text{previous best cost in } \textit{C\_Costs}$}
                        \State $\textit{Inserted}.\textit{push\_back}(\textit{cost\_set})$
                    \EndIf
                    \State Update $\textit{N\_Costs}$ if a better cost is found
                \EndIf
            \EndFor
            \State Deduplicate $\textit{Inserted}$ to retain only entries with minimum cost
            \State Insert all items from $\textit{Inserted}$ into $\textit{C\_Costs}$
        \EndWhile
        \State \Return extraction result based on optimal choices
    \EndProcedure
        \end{algorithmic}
        \end{algorithm}
    \end{minipage}
    \vspace{-5mm}
\end{figure}

\textbf{Key Observations} -- Through analysis, we derive key observations:

\begin{enumerate}[leftmargin=*]
    \item \textbf{Local optima do not guarantee global optimality:} Selecting the e-node with the minimum local DAG cost within an e-class does not necessarily lead to the global optimal overall solution. For instance, in the e-class containing nodes $E_2$ and $E_3$, $E_2$ has a lower local DAG cost (11) compared to $E_3$ (12). However, considering the entire expression, choosing $E_3$ actually yields a superior global solution due to the shared sub-expression $E_7$, as demonstrated in the green node of Figure~\ref{fig:motivation_3}. Selecting $E_3$ in this case leads to a total DAG cost of 16 for the root expression, calculated as $4$ (\text{node cost of } $E_1$) $+ 12$ (\text{DAG cost of } $E_3$) $+ 4$ (\text{DAG cost of } $E_7$) $ - 4$ (\text{accounting for } $E_7$ \text{ being a shared subexpression}) $= 16$. This example illustrates why purely heuristic approaches can result in suboptimal final solutions.
    \item \textbf{{Local minima} provide effective search space pruning:} While greedy selection is not globally optimal, there is often a strong correlation: e-nodes exhibiting lower local DAG costs tend to be included in the globally optimal solution more frequently. Exploiting this correlation enables substantial search space pruning by focusing on these high-potential candidates. This problem can be reduced sufficiently to allow the application of exact solvers (such as ILP) on the pruned space. {We will provide more details in Section \ref{sec:exactsolving} and Section \ref{sec:prunespace}.} 

    \item \textbf{Extraction exhibits weak data dependence:} The convergence of the heuristic DAG cost calculation does not strictly depend on the order in which e-nodes are processed. While different processing orders might lead to different intermediate cost values, the algorithm ultimately converges to a stable state where locally optimal e-nodes are selected within each e-class. This property is significant because it enables the parallel computation of DAG costs for multiple e-nodes simultaneously without compromising the correctness or the quality of the final converged heuristic solution. {More explanations will be presented in Section \ref{sec:multi}.} 
    
\end{enumerate}


\section{Approach}


\label{sec:methodology}

In this section, we present a comprehensive overview of \name, our novel framework for e-graph extraction. \name accepts a saturated e-graph as input and produces a high-quality selection of e-nodes from each e-class, substantially improving extraction results while maintaining computational efficiency. 
\texttt{e-boost} consists of four key components: (1) a parallelized heuristic e-graph extraction algorithm that efficiently computes preliminary DAG costs, (2) a search space pruning technique that leverages these heuristic results to significantly reduce the problem size, (3) 
{exact solving with pruning and warm-start}, and (4) domain-specific optimizations that accelerate the extraction process. We explore each of these components in detail throughout this section.

\subsection{Multi-threading E-Graph Extraction}

\label{sec:multi}

Our approach employs heuristic algorithms for efficient search space pruning and provides a high-quality warm start for the subsequent exact solver. Building on established extraction methods, Algorithm \ref{alg:extraction} implements an efficient bottom-up approach for e-graph extraction. The algorithm first identifies all parent-child relationships within the e-graph and initializes analysis from leaf nodes. The function \textsc{CalculateCostSet} computes the cost of selecting a specific e-node while accounting for shared subexpressions to eliminate redundant calculations. Then it merges cost sets and detects potential cycles that would lead to invalid expressions. For core function \textsc{Extract}, two key data structures guide the extraction process: \texttt{C\_Costs} maintains the optimal selection for each e-class along with its associated cost, forming the foundation for the final extraction result. Additionally, \texttt{N\_Costs} tracks individual e-node costs, enabling efficient search space pruning by quickly identifying suboptimal nodes. This cost tracking system counts shared nodes only once, accurately representing expressions with repeated substructures. The extraction process proceeds through dynamic programming, iteratively evaluating node cost contributions and updating the global solution whenever a more efficient representation is discovered. Cost optimizations propagate upward through the e-graph until reaching a stable state, where each e-class has selected the locally optimal e-node from all candidates, yielding a converged extraction solution. Figure \ref{fig:motivation_2} illustrates a heuristic extraction example, with green e-nodes highlighting the selected extraction results.

\textbf{Parallelized Extraction} In Algorithm \ref{alg:extraction}, the while loop from lines 15-25 constitutes the runtime bottleneck, dominating total execution time. 
To accelerate the convergence runtime, we developed a parallelized optimized version for the algorithm. The optimized multi-threading extraction algorithm is presented in Algorithm \ref{alg:paralleled_extraction}. Leveraging the weak data dependence property identified in our motivating example, this algorithm implements fine-grained parallelism by processing multiple e-nodes simultaneously in each iteration, with the batch size dynamically set according to the available thread count. Rather than serially updating the shared \texttt{C\_Costs} data structure, the algorithm employs a temporary \texttt{Inserted} collection to minimize lock contention and reduce thread synchronization overhead. For each batch of e-nodes, the algorithm performs parallel cost calculations, collects potential updates in the \texttt{Inserted} container, and then applies a deduplication step (line 14) that ensures only entries with minimum costs are retained before updating the global state. This approach significantly reduces the number of while loop iterations required for convergence, as each thread can compute costs independently without affecting algorithm correctness. The deduplication mechanism guarantees that concurrent updates to the same e-class maintain the invariant that only the minimum cost representation is preserved, thereby ensuring solution quality while achieving substantial performance improvements. 

It is important to note the data dependencies inherent in this parallel approach. Specifically, intermediate states during convergence, and potentially the exact final solution, may differ from sequential execution or between different parallel runs. However, this dependence is weak: the algorithm is guaranteed to converge to a stable state where each e-class selects a local optimum. We now proceed with a formal analysis of this convergence behavior and the implications of this weak data dependence:

\begin{theorem}
The parallelized extraction algorithm converges to a solution with equivalent quality to the sequential algorithm despite weak data dependence in execution order.
\end{theorem}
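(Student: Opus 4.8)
\emph{Proof idea.} The plan is to present both algorithms as instances of a single monotone fixed‑point iteration, and then to show that every fair schedule of that iteration---in particular the sequential Gauss--Seidel schedule of Algorithm~\ref{alg:extraction} and the batched, snapshot‑based schedule of Algorithm~\ref{alg:paralleled_extraction}---drives the state to the same canonical fixed point, which pins down the DAG cost of every e‑class. Concretely, I would let $\mathbb{L}$ be the finite lattice of assignments of a cost value (a nonnegative rational, or $\infty$ for an e‑class that is not yet, or never, grounded) to each e‑class, ordered pointwise, and let $F:\mathbb{L}\to\mathbb{L}$ send an assignment $A$ to the assignment whose value on e‑class $c$ is $\min_{n\in c}\mathrm{cost}_n(A)$, where $\mathrm{cost}_n(A)$ is exactly the quantity returned by \textsc{CalculateCostSet} from the child‑class costs $A$ (with the cycle case returning $\infty$). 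Both algorithms start from the top element (all e‑classes $\infty$, leaves immediately collapsing to their node cost), and every write they perform replaces $A(c)$ by $\min(A(c),\mathrm{cost}_n(A))$ for some $n\in c$ evaluated against a snapshot of $A$; in the parallel case the deduplication step (line~14) together with the per‑element test that keeps only proposals strictly below the current best is precisely what makes a whole batch of such writes collapse to the pointwise minimum of the proposals, so the net effect of one batch is again a coordinatewise $\min$‑update against the pre‑batch snapshot.

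With this setup the argument runs in three steps. (1) \emph{Termination.} Costs are monotonically non‑increasing and drawn from a finite set (each cost is a sum over a subset of e‑nodes of their fixed node costs, together with $\infty$), so both iterations stabilize after finitely many rounds; moreover, once no write fires the state $A$ satisfies $A(c)=\min_{n\in c}\mathrm{cost}_n(A)$ for every $c$ (any class whose stored value was ever lowered was lowered to a witnessed $\mathrm{cost}_{n^{\ast}}(A)$, and stability forbids any sibling e‑node from beating it), i.e.\ $A=F(A)$. (2) \emph{Invariant.} I would show by induction over the writes that every reachable state $A$ satisfies $A\ge\nu F$ pointwise, where $\nu F$ is the greatest fixed point of $F$: this holds at the top, and if $A\ge\nu F$ then $\mathrm{cost}_n(A)\ge\mathrm{cost}_n(\nu F)\ge\nu F(\mathrm{class}(n))$ by monotonicity of $\mathrm{cost}_n$ and $\nu F=F(\nu F)$, so the $\min$‑update---and hence, via line~14, the batched $\min$‑update---keeps $A\ge\nu F$. (3) \emph{Pinning the limit.} Combining (1) and (2), the limit is a fixed point of $F$ lying above $\nu F$; since $\nu F$ is the \emph{greatest} fixed point it is also $\le\nu F$, hence equals $\nu F$. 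Thus both algorithms converge, and to the same assignment $\nu F$; in particular every e‑class, and therefore the root expression, carries the same DAG cost in both runs, which is the claimed equivalence of quality.

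The main obstacle is step~(2)'s hypothesis that $\mathrm{cost}_n$ is monotone in the child‑class cost assignment. This is not automatic for DAG cost with shared‑subexpression bookkeeping: \textsc{CalculateCostSet} charges the \emph{union} of the children's representative node sets, and replacing a child's representation by a lower‑cost but set‑incomparable one can enlarge that union. I would handle this by carrying the richer state in which each e‑class stores its cost \emph{set} rather than only the scalar, ordered by inclusion, and proving that the merge‑and‑sum operation is monotone on that domain under the selection/tie‑breaking rule actually used by \textsc{Extract} (lowest weight, incumbent retained on ties); the finite‑descending‑chain and greatest‑fixed‑point machinery above then applies verbatim on this domain, with scalar cost recovered as a monotone image. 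The remaining, lighter obstacle is the parallel bookkeeping: one must verify that the guard ``all children already in \texttt{C\_Costs}'', snapshot‑consistent evaluation, and line~14 together realize a single monotone batch operator that never installs a value worse than the snapshot, after which Algorithm~\ref{alg:paralleled_extraction} is simply one more fair schedule and steps~(1)--(3) close the proof.
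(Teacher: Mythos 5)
Your architecture (a single monotone operator $F$ on a finite lattice, chaotic/fair scheduling, termination by finite descending chains, an invariant $A\ge\nu F$, and uniqueness of the limit as the greatest fixed point) is a genuinely different and far more formal route than the paper takes: the paper's proof simply observes that the guard on children, the per-batch deduplication, and the ``update only if strictly better'' test together mimic the sequential min-updates, and concludes that both schedules reach \emph{a} stable state satisfying $cost(S^*[c])=\min_{v\in c}cost(v)$, explicitly conceding that the stable states need not coincide and retreating to the claim that any such state is equally useful for pruning and warm-starting. The problem is that your entire argument rests on the monotonicity hypothesis in step (2), you correctly flag that it fails for DAG costs with shared-subexpression bookkeeping, and the repair you sketch does not close the gap. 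Ordering e-class states by inclusion of their representative node sets does make merge-and-sum monotone, but the algorithm's updates are not descending in that order: \textsc{CalculateCostSet} replaces a class's cost set by one of lower \emph{scalar} cost, which is in general inclusion-incomparable with the incumbent (that is precisely the sharing phenomenon in Figure~\ref{fig:motivation_3}, where $E_3$ has a larger node set than $E_2$ yet yields a better global cost). Hence the induction ``every reachable state dominates $\nu F$'' breaks on the enriched domain too, and with it steps (2) and (3). Indeed, the conclusion of step (3) --- that sequential and parallel runs converge to the \emph{same} assignment $\nu F$ and hence the same root cost --- is stronger than what is true: for non-monotone min-updates the stable state genuinely depends on the order in which incomparable representatives are installed, which is exactly the ``weak data dependence'' the theorem is named after.

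Even step (1) needs care on the enriched domain: termination no longer follows from a descending chain argument once updates are not monotone, and one must instead argue (as the paper implicitly does) that the stored scalar cost of each e-class is strictly decreasing over the finite set of achievable cost values, while the associated sets merely ride along. If you want to salvage your framework, the realistic target is the paper's weaker claim: both schedules terminate, and both terminate in a state that is locally stable (no single e-node swap in any e-class lowers that class's recorded DAG cost against the current selections of its children). That statement \emph{is} provable with your machinery restricted to the scalar projection --- termination from finite strictly-decreasing scalar updates, stability from the exit condition plus the batch deduplication --- but it establishes ``both outputs are local optima of the same kind,'' not ``both outputs have equal cost.'' You should either prove that weaker statement and note that it suffices for the role the heuristic plays in \name (search-space pruning and warm start), or exhibit an argument that all locally stable states share the same root cost, which neither you nor the paper currently provides.
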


\begin{proof}
Let $G$ be the e-graph and $V$ be the set of e-nodes collection. Define $S_t$ as the state of \texttt{C\_Costs} at iteration $t$, where each e-class $c$ is mapped to its currently selected e-node and associated cost. Let $S^*$ represent the stable state where no further updates to \texttt{C\_Costs} occur.

Let $S[v] \subseteq V$ be the set of unique e-nodes in the minimum-cost children e-classes at e-node $v$. The cost for any e-node $v \in V$ is:
\begin{equation} \label{eq:dag_cost_formal_no_mathrm}
Cost(v) = \sum_{n \in S[v] \cup \{v\}} \mathrm{node\_cost}(n)
\end{equation}
where $node\_cost(n)$ is the intrinsic node cost associated with the individual e-node $n$.

The sequential algorithm processes one node per iteration, updating \texttt{C\_Costs} when a better solution is found:
\begin{equation}
\forall c \in \text{e-classes}(G), S_t[c] \geq \min_{v \in c} cost(v)
\end{equation}

In the parallelized algorithm, multiple e-nodes are processed concurrently before updating \texttt{C\_Costs}. This changes the order of executions but preserves the essential mechanisms:

1. The condition in line 8 of Algorithm \ref{alg:paralleled_extraction} ensures that an e-node cost is only calculated when all its children e-classes have stable selections, enforcing bottom-up execution order.

2. The deduplication step (line 14) guarantees that for each e-class, only the minimum cost selection is retained:
\begin{equation}
\forall c \in \text{e-classes}(G), S_{t+1}[c] = \min(S_t[c], \min_{v \in candidates_t(c)} cost(v))
\end{equation}
where $candidates_t(c)$ represents e-nodes in class $c$ processed in the current batch.

Both algorithms terminate when no further improvements to \texttt{C\_Costs} are possible. While intermediate states $S_t$ differ between sequential and parallel executions due to evaluation order, both algorithms converge to a similar fixed point $S^*$ characterized by:

\begin{equation}
\forall c \in \text{e-classes}(G), cost(S^*[c]) = \min_{v \in c} cost(v)
\end{equation}

The fixed points may not be identical when multiple e-nodes within an e-class have equal minimum costs, as the algorithm preserves the first encountered minimum-cost solution. However, any stable state reached by the algorithm is equally valid for the purposes of DAG cost extraction. The objective is not to pursue a specific fixed point, but rather any stable state, as all such states provide equivalent heuristic cost information for subsequent processing. Overall, the weak data dependence means that while intermediate states $S_t$ may differ between sequential and parallel executions, both algorithms converge to a similar fixed point $S^*$. This fixed point is uniquely determined by the structure of the e-graph and node costs, independent of execution order. Therefore, the parallelized algorithm produces a solution with equivalent quality to the sequential algorithm, despite differences in the traversal path to convergence.

\end{proof}

\subsection{Adaptive Search Space Reduction}
\label{sec:prunespace}

We leverage the cost information recorded in \texttt{N\_Costs} from Algorithm \ref{alg:paralleled_extraction} to strategically prune the search space for exact solving. For each e-class, we identify the minimum DAG cost ($cost_{min}$) among all candidate e-nodes and apply a parameterized threshold $\theta$ ($\theta \geq 1$) to determine which nodes to retain. Specifically, we preserve only those e-nodes with costs less than or equal to $cost_{min} \cdot \theta$, effectively filtering out suboptimal candidates while maintaining promising alternatives.

Figure \ref{fig:motivation_2} illustrates this pruning mechanism with $\theta = 1.25$. In the e-class containing $E_2$ and $E_3$, $E_2$ has the minimum DAG cost of 11, while $E_3$ has a cost of 12. Since 12 < $11 \cdot 1.25 = 13.75$, both e-nodes are retained in the search space. Conversely, in the e-class containing $E_5$ and $E_6$, $E_6$ has a minimum cost of 2, while $E_5$ has a cost of 6. Since 6 > $2 \cdot 1.25 = 2.5$, $E_5$ is pruned from the search space, and only $E_6$ is preserved for exact solving. Similarly, $E_9$ is retained in the search space, while $E_{10}$ is pruned because its cost exceeds the threshold relative to the minimum cost in its e-class.

Building on the pruning approach, we acknowledge that the global optimality cannot be guaranteed in all cases. In an extreme scenario where the globally optimal solution includes e-nodes with costs significantly exceeding the minimum within their respective e-classes, these optimal candidates will be pruned by our threshold mechanism. However, across our extensive benchmark testing, we rarely encountered such a scenario. This empirical observation validates our approach as an effective balance between theoretical optimality and practical performance, enabling near-optimal solutions for problem instances.

\subsection{{Exact solving with pruning and warm-start} 
}

\label{sec:exactsolving}

We formulate the e-graph extraction problem with search space pruning as an Integer Linear Program based on the approach in \cite{yang2021equality}. The objective and constraints are formulated in Equation \ref{eq:ilp_formulation}, where $s_i$ represents the selection of e-node $i$, $c_i$ is its cost, $A_j$ indicates activation of e-class $j$, and $\mathcal{C}$, $\mathcal{N}$, and $\mathcal{R}$ denote the sets of e-classes, e-nodes, and root e-classes, respectively. Variables $L_j$ establish level assignments to prevent cycles, with $\text{Opp}_i$ serving as opposite variables in the cycle prevention constraints. $M$ is a sufficiently large constant set to $|\mathcal{C}|+1$. Variable $s_n^{heu}$ is the selection from heuristic algorithm solution. Constraint (5g) and (5h) represent our contribution, where $\mathcal{P}$ is the set of pruned e-nodes identified by our heuristic algorithm. This constraint explicitly removes low-potential candidates from consideration, substantially reducing the solver's search space while maintaining solution quality. Additionally, we leverage the heuristic solution as a warm-start initialization for the exact solver, providing it with a high-quality starting point. Specifically, we pass the extracted nodes from our heuristic algorithm (illustrated as green nodes in Figure \ref{fig:motivation_2}) to guide the exact solving process.
\begin{align}
\small
\min_{s \in \{0,1\}} f(s) &= \sum_{i=1}^{N} c_i s_i \tag{5a} \\
\text{s.t.} \sum_{n_i \in C_j} s_i &= A_j \quad \forall j \in \mathcal{C} \tag{5b} \\
s_i &\leq A_j \quad \forall i, \forall j \in \text{children}(i) \tag{5c} \\
A_r &= 1 \quad \forall r \in \mathcal{R} \tag{5d} \\
L_j - L_k + M \cdot \text{Opp}_i &\geq 1 \quad \forall i, \forall k \in \text{children}(i), j = \text{class}(i), j \neq k \tag{5e} \\
s_i + \text{Opp}_i &= 1 \quad \forall i \in \mathcal{N} \tag{5f} \\
s_i &= 0 \quad \forall i \in \mathcal{P} \tag{5g} \\
s_n \leftarrow s_n^{heu} & \tag{5h} \quad \forall n \in \mathcal{N}  \\
s_i, A_j,& \text{Opp}_i \in \{0,1\} \tag{5i} \\ \label{eq:ilp_formulation}
\end{align}

\subsection{Optimization for \name}

In addition to our algorithmic improvements, we implement several key optimizations in \name:

\begin{itemize}
    \item We apply redundancy elimination by retaining only one e-node with minimum cost when multiple e-nodes share identical child e-classes within an e-class. This optimization is safe since the optimal solution always selects the minimum-cost option. For example, in Boolean algebra e-graphs with commutative laws, expressions like $(+ \thickspace a \thickspace b)$ and $(+ \thickspace b \thickspace a)$ co-exist in the same e-class. Assuming \textit{addition} has the lowest cost among operators with $a$ and $b$ as children, we preserve just one addition variant.
    \item We implement memory-efficient data structures with adaptive typing for hash maps. For e-graphs under 65,536 e-classes, we use 16-bit unsigned integers as hash keys, reducing memory by 50\% compared to 32-bit integers. For larger e-graphs, we automatically scale to 32-bit integers, optimizing memory usage while maintaining performance across diverse problem sizes.
    \item While our research primarily focuses on DAG cost extraction, we also extend Algorithm \ref{alg:paralleled_extraction} to handle tree and depth cost extraction.
\end{itemize}

\section{Experiment}

\begin{figure*}[htbp]
    \centering
    \centering
    \begin{subfigure}[t]{0.35\linewidth}
        \hfill
        \includegraphics[width=\textwidth]{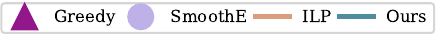}
        \label{fig:legend}
    \end{subfigure}
    \\
    \vspace{-2mm}
    \hfill
    \begin{subfigure}[t]{0.33\linewidth}
        \centering
        \includegraphics[width=\textwidth]{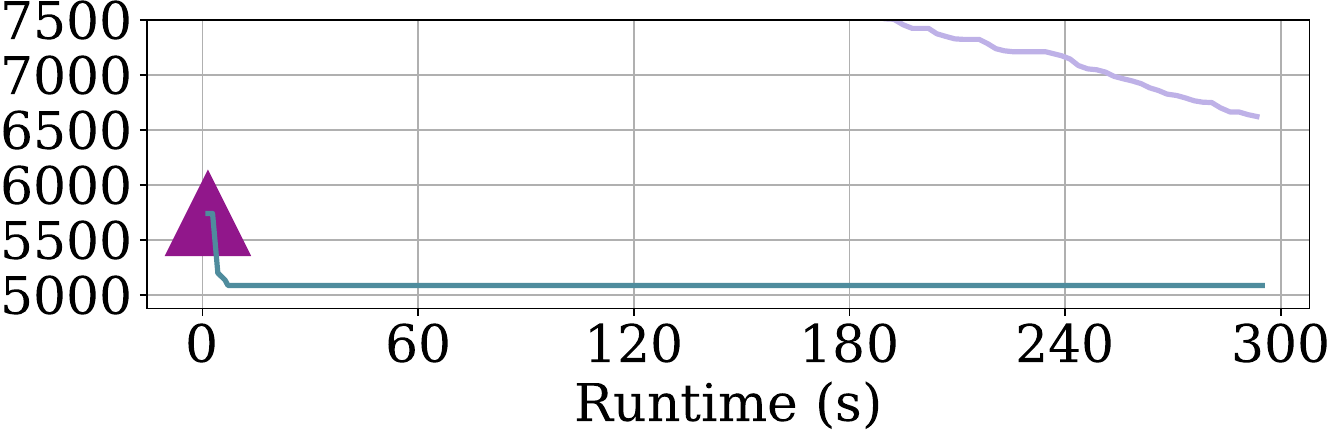}
        \caption{BoolE, mul32, CPLEX. ILP timeout}
        \label{fig:BoolE_mul32_Cplex}
    \end{subfigure}%
    \hfill
    \begin{subfigure}[t]{0.33\linewidth}
        \centering
        \includegraphics[width=\textwidth]{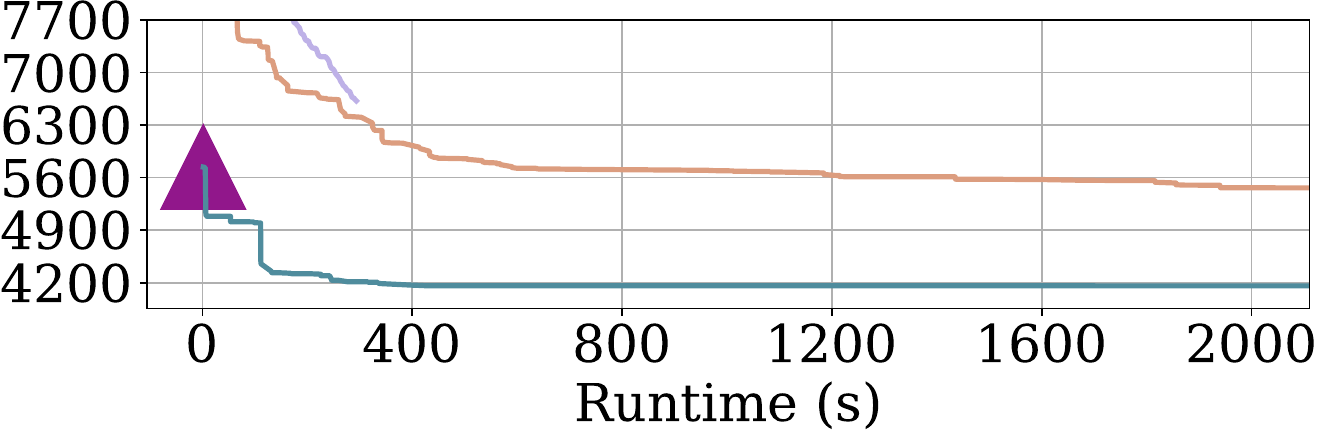}
        \caption{BoolE, mul32, Gurobi}
        \label{fig:BoolE_mul32_Gurobi}
    \end{subfigure}
    \hfill
    \begin{subfigure}[t]{0.33\linewidth}
        \centering
        \includegraphics[width=\textwidth]{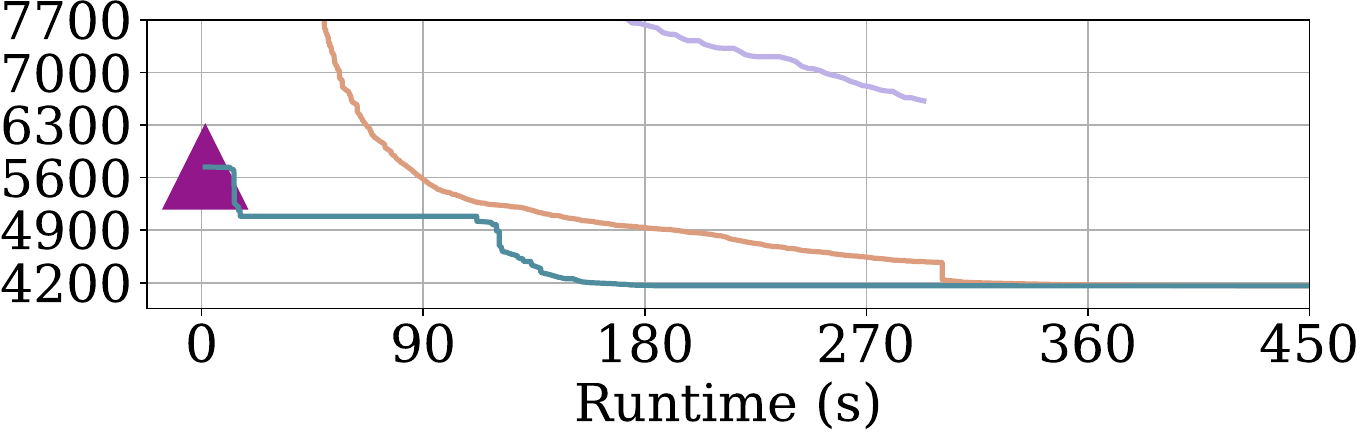}
        \caption{BoolE, mul32, CP-SAT}
        \label{fig:BoolE_mul32_CP-SAT}
    \end{subfigure}
    \hfill
    \begin{subfigure}[t]{0.33\linewidth}
        \centering
        \includegraphics[width=\textwidth]{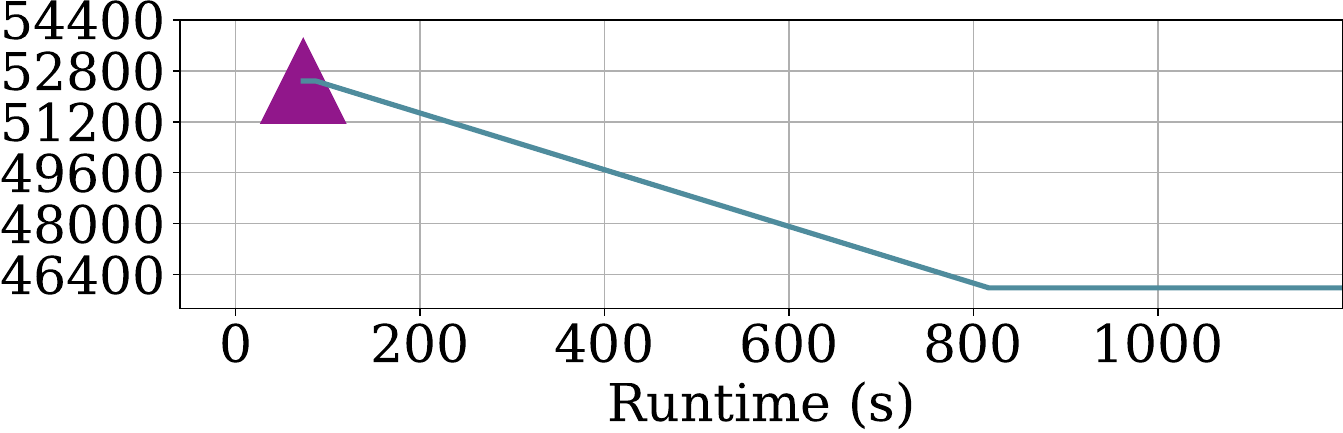}
       \captionsetup{justification=centering}
        \caption{BoolE, mul96. ILP, SmoothE failed}
        \label{fig:BoolE_mul96_Cplex}
    \end{subfigure}%
    \hfill
    \begin{subfigure}[t]{0.33\linewidth}
        \centering
        \includegraphics[width=\textwidth]{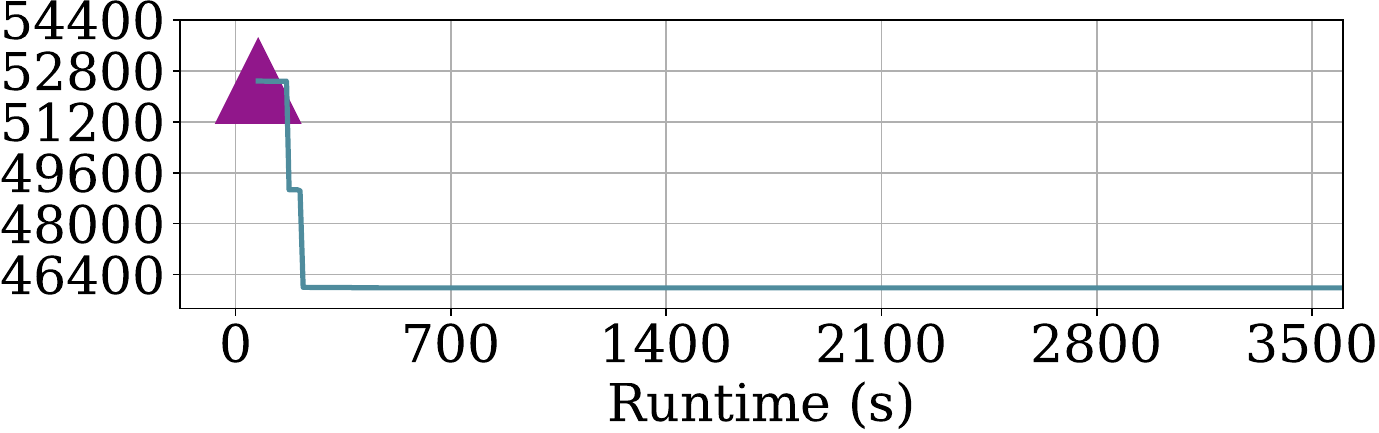}
        \captionsetup{justification=centering}
        \caption{BoolE, mul96. Gurobi, ILP, SmoothE failed}
        \label{fig:BoolE_mul96_Gurobi}
    \end{subfigure}
    \hfill
    \begin{subfigure}[t]{0.33\linewidth}
        \centering
        \includegraphics[width=\textwidth]{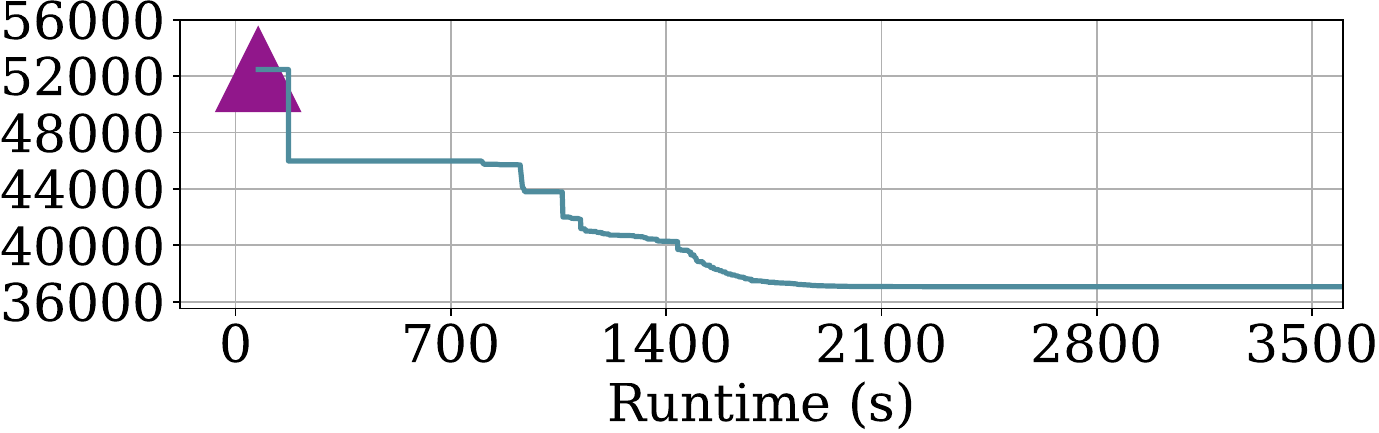}
        \captionsetup{justification=centering}
        \caption{BoolE, mul96. CP-SAT, ILP, SmoothE failed}
        \label{fig:BoolE_mul96_CP-SAT}
    \end{subfigure}
    \hfill
    \begin{subfigure}[t]{0.33\linewidth}
        \centering
        \includegraphics[width=\textwidth]{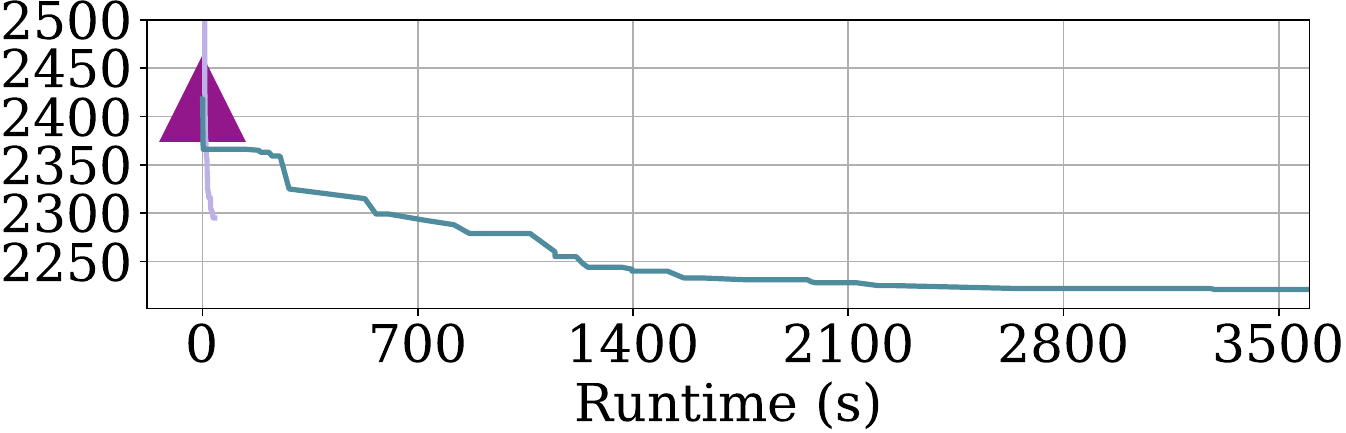}
        \caption{ESyn, c2670, CPLEX, ILP timeout}
        \label{fig:esyn_c2670_CPLEX}
    \end{subfigure}%
    \hfill
    \begin{subfigure}[t]{0.33\linewidth}
        \centering
        \includegraphics[width=\textwidth]{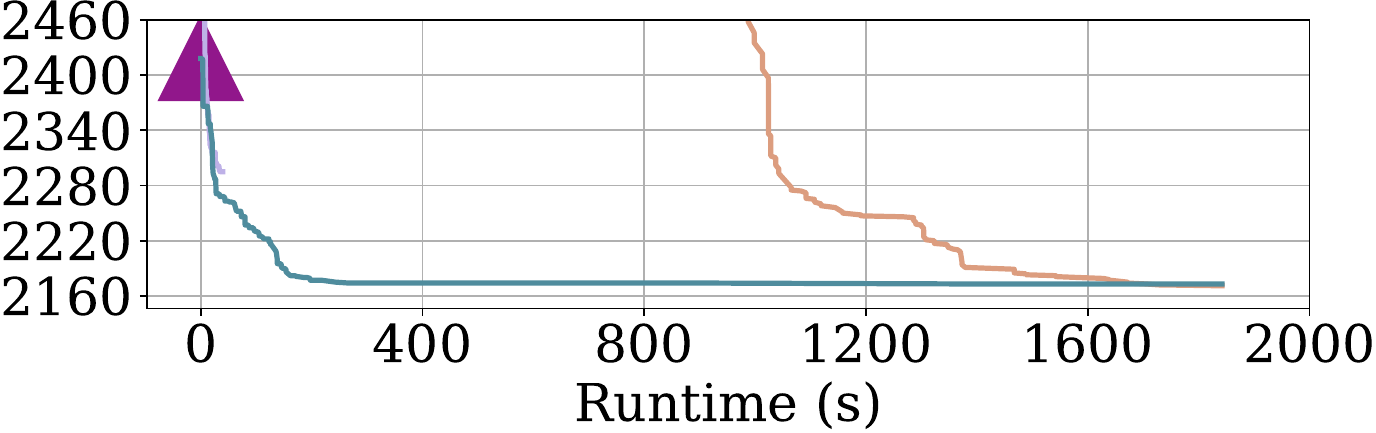}
        \caption{ESyn, c2670, Gurobi}
        \label{fig:esyn_c2670_Gurobi}
    \end{subfigure}
    \hfill
    \begin{subfigure}[t]{0.33\linewidth}
        \centering
        \includegraphics[width=\textwidth]{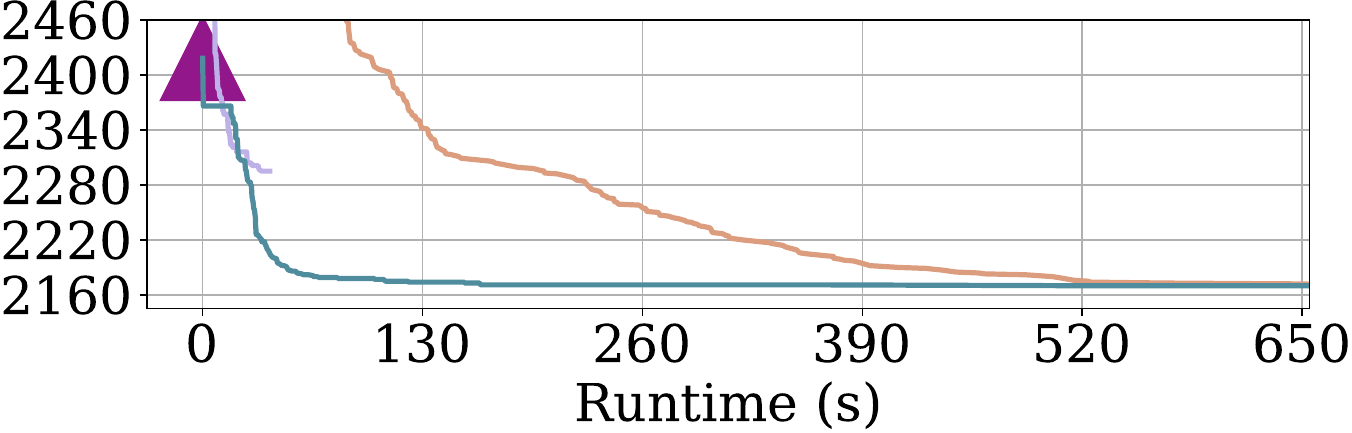}
        \caption{ESyn, c2670, CP-SAT}
        \label{fig:esyn_c2670_CP-SAT}
    \end{subfigure}
    \hfill
    \begin{subfigure}[t]{0.33\linewidth}
        \centering
        \includegraphics[width=\textwidth]{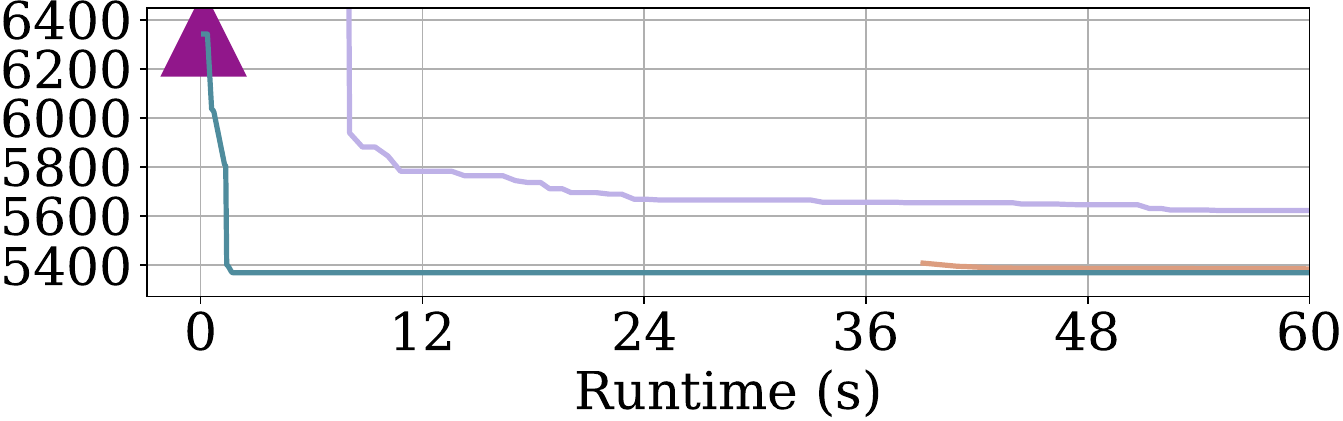}
        \captionsetup{justification=centering}
        \caption{Esyn, qdiv, CPLEX \\ \name and ILP overlapped}
        \label{fig:esyn_qdiv_cplex}
    \end{subfigure}%
    \hfill
    \begin{subfigure}[t]{0.33\linewidth}
        \centering
        \includegraphics[width=\textwidth]{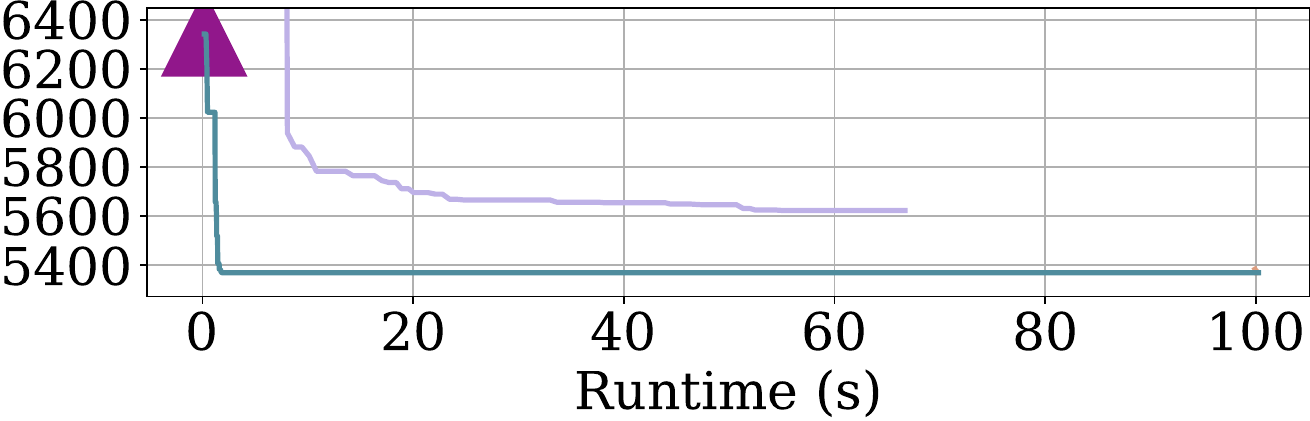}
        \captionsetup{justification=centering}
        \caption{Esyn, qdiv, Gurobi \\ \name and ILP overlapped}
        \label{fig:esyn_qdiv_gurobi}
    \end{subfigure}
    \hfill
    \begin{subfigure}[t]{0.33\linewidth}
        \centering
        \includegraphics[width=\textwidth]{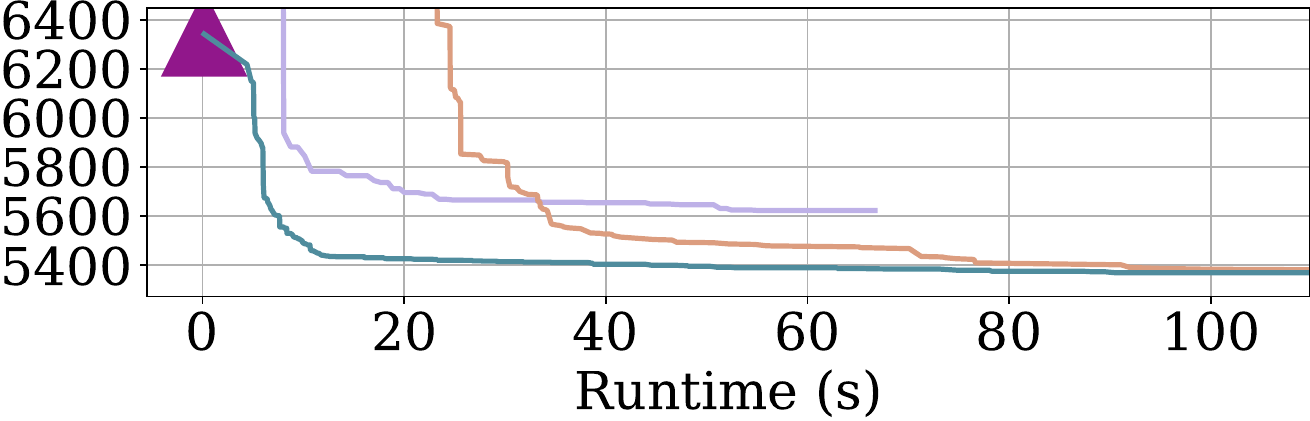}
        \caption{Esyn, qdiv, CP-SAT}
        \label{fig:esyn_qdiv_cpsat}
    \end{subfigure}
    \vspace{-1mm}
    \caption{Performance comparison between \name, SmoothE, and vanilla exact solving for representative benchmarks. For fair comparison, all curves requiring heuristic initialization are right-shifted by the heuristic runtime. Overall, \name delivers 19.04\% performance improvement over SmoothE while providing $138\times$ runtime speedup to achieve equivalent results. Compared to vanilla exact solving (with a 3600s timeout), our approach achieves 5.6\% better performance quality while providing $558\times$ runtime speedup to reach equivalent-quality solutions.}
    \label{fig:extract_results}
    \vspace{-5mm}
\end{figure*}

Our experiments were conducted on a server with an Intel Xeon Gold 6418H CPU (48 physical cores, 96 logical threads) and 1TB RAM. For GPU-based comparisons, we utilized two NVIDIA A6000 GPUs with 96 GB total memory. 
We evaluated \name with diverse benchmarks from: (1) logic synthesis benchmarks from BoolE\cite{yin2025boole}, E-Syn \cite{chen2024syn,chen2025emorphicscalableequalitysaturation} suites; and (2) computational kernels from program optimization frameworks including IMpress\cite{ustun2022impress}, TENSAT\cite{yang2021equality}, ROVER \cite{coward2023automating,coward2022automatic}, and Diospyros\cite{vanhattum2021vectorization}.
We compared \name against two SOTA extraction approaches: (1) vanilla ILP solving; and (2) SmoothE \cite{cai2025smoothe}, the differentiable e-graph extraction framework.
All implementations use the same e-graph infrastructure, and cost models were adjusted to non-negative integers for fair comparison.

To evaluate the effectiveness of \name, we will discuss the following research questions (RQs) in this section.

\subsection{Performance of E-graph extraction}

\noindent
\textbf{RQ1: How effective and scalable is \name for e-graph extraction?} We test the performance of various extraction methods on benchmarks with a 3600s timeout, comparing SmoothE, vanilla ILP, and \name. Figure~\ref{fig:extract_results} displays the convergence curves, with runtime on the X-axis and objective costs on the Y-axis. For fair comparison, all curves of \name are right-shifted by the corresponding heuristic runtime. For benchmarks such as mul32 (Figure \ref{fig:BoolE_mul32_Cplex} - Figure \ref{fig:BoolE_mul32_CP-SAT}), \name consistently outperforms other approaches, converging faster to lower-cost solutions. In larger instances such as mul96 (Figure \ref{fig:BoolE_mul96_Cplex} - Figure \ref{fig:BoolE_mul96_CP-SAT}), all methods except \name fail to provide a valid solution. For the c2670 and qdiv benchmarks (Figure \ref{fig:esyn_c2670_CPLEX} - Figure \ref{fig:esyn_qdiv_cpsat}), \name rapidly converges to near-optimal solutions. 


Additionally, we present comprehensive runtime performance data in Table~\ref{tab:comprehensive_convergence}, focusing on large e-graphs exceeding 10,000 e-nodes. 
This table compares the convergence speed of our approach (\name) using different exact solvers (CPLEX \cite{ibm_ilog_cplex_optimization_studio_2025}, Gurobi \cite{gurobi_2025}, CP-SAT\cite{google_cp_sat_2024}) and pruning thresholds ($\theta \in \{1, 1.05, 1.25, 1.5\}$).

Performance is measured by the time required to reach specific normalized optimality gaps, denoted by $\alpha$. 
This metric is calculated as $\alpha = (\texttt{Current\_Cost} - \texttt{BKS}) / (H - \texttt{BKS})$, where $\texttt{BKS}$ represents the best-known solution cost for the benchmark and $H$ is the initial heuristic solution cost. 
Lower $\alpha$ values indicate better solutions, with $\alpha=0$ signifying that the best-known solution has been reached. 
The table reports runtime for achieving target $\alpha$ values of $\{0.5, 0.25, 0.1, 0.05, 0.0\}$. 
For example, for benchmark \texttt{mul32}, \name reaches optimal ($\alpha = 0$) in 200.34 seconds (marked in red) using CP-SAT with $\theta = 1.25$. 
For reference, we also include SmoothE's performance, reporting its runtime and the final $\alpha$ value it achieved within the time limit. 
Note that an achieved $\alpha > 1$ indicates a solution worse than the initial heuristic cost $H$.

\begin{table*}[htbp]
\scriptsize
\centering
\caption{\textbf{Comprehensive Runtime Performance Comparison.} This table compares runtimes (seconds) of our approach (\name) using exact solvers (CPLEX, Gurobi, CP-SAT) with varying pruning thresholds ($\theta \in \{1, 1.05, 1.25, 1.5\}$) against the SmoothE baseline on benchmarks >10k e-nodes. A special case, denoted $\theta = \text{ILP}$, represents vanilla ILP solving without heuristic pruning or warm-start. Runtimes are reported for achieving specific normalized optimality gaps $\alpha = (\texttt{Current\_Cost} - \texttt{BKS}) / (H - \texttt{BKS})$, where BKS is the best-known solution cost, $H$ is an initial heuristic cost, and lower $\alpha$ is better ($\alpha=0$ reaches BKS). Target $\alpha$ values are $\{0.5, 0.25, 0.1, 0.05, 0.0\}$. For example, for benchmark \texttt{mul32}, \name reaches optimal ($\alpha = 0$) in 200.34 seconds using CP-SAT with $\theta = 1.25$ (marked in red). Benchmark e-node counts and SmoothE performance (runtime, achieved $\alpha$) are included for reference. The symbol $\infty$ indicates failure to reach the target $\alpha$ within the time limit, and OOM denotes cases where the solver ran out of memory.}
\resizebox{\textwidth}{!}{
\begin{tabular}{l|c|c|c|c|c|c|c|c|c|c|c|c|c|c|c|c|c|c|c|c|c}
\hline
\multirow{2}{*}{\textbf{Benchmark} } & \multirow{2}{*}{\textbf{\#e-nodes} } & \multicolumn{2}{c|}{\textbf{SmoothE}} & \multirow{2}{*}{\textbf{{$\theta$}}} & \multicolumn{5}{c|}{\textbf{CPLEX-$\alpha$}} & \multirow{2}{*}{\textbf{{$\theta$}}} & \multicolumn{5}{c|}{\textbf{Gurobi-$\alpha$}} & \multirow{2}{*}{\textbf{{$\theta$}}} & \multicolumn{5}{c}{\textbf{CP-SAT-$\alpha$}} \\
\cline{3-4} \cline{6-10} \cline{12-16} \cline{18-22}
& & \textbf{Time}  & \textbf{$\alpha$} & & \textbf{0.5} & \textbf{0.25} & \textbf{0.1} & \textbf{0.05} & \textbf{0.0} &  & \textbf{0.5} & \textbf{0.25} & \textbf{0.1} & \textbf{0.05} & \textbf{0.0} &  & \textbf{0.5} & \textbf{0.25} & \textbf{0.1} & \textbf{0.05} & \textbf{0.0} \\
\hline\hline
\multirow{5}{*}{\textbf{mul32}\cite{yin2025boole}}& \multirow{5}{*}{{44557}} & \multirow{5}{*}{{293.31}} & \multirow{5}{*}{{1.56}} & ILP & $\infty$ & $\infty$ & $\infty$ & $\infty$ & $\infty$ & ILP & $\infty$ & $\infty$ & $\infty$ & $\infty$ & $\infty$ & ILP & 175.71 & 266.53 & 300.79 & 301.13 & $\infty$\\
 & &  & & 1 & $\infty$ & $\infty$ & $\infty$ & $\infty$ & $\infty$ & 1 & $\infty$ & $\infty$ & $\infty$ & $\infty$ & $\infty$ & 1 & $\infty$ & $\infty$ & $\infty$ & $\infty$ & $\infty$\\
 & &  & & 1.05 & $\infty$ & $\infty$ & $\infty$ & $\infty$ & $\infty$ & 1.05 & $\infty$ & $\infty$ & $\infty$ & $\infty$ & $\infty$ & 1.05 & $\infty$ & $\infty$ & $\infty$ & $\infty$ & $\infty$\\
 & &  & & 1.25 & $\infty$ & $\infty$ & $\infty$ & $\infty$ & $\infty$ & 1.25 & 109.73 & 109.76 & 240.98 & 244.5 & $\infty$ & 1.25 & 118.26 & 126.74 & 138.97 & 150.33 & \color{red}{200.34}\\
 & &  & & 1.5 & $\infty$ & $\infty$ & $\infty$ & $\infty$ & $\infty$ & 1.5 & 132.22 & 132.30 & 223.9 & 313.52 & $\infty$ & 1.5 & 140.7 & 148.07 & 174.79 & 189.85 & $\infty$\\
\hline
\multirow{5}{*}{\textbf{mul48}\cite{yin2025boole}}& \multirow{5}{*}{{102157}} & \multirow{5}{*}{{655.46}} & \multirow{5}{*}{{2.53}} & ILP & $\infty$ & $\infty$ & $\infty$ & $\infty$ & $\infty$ & ILP & $\infty$ & $\infty$ & $\infty$ & $\infty$ & $\infty$ & ILP & 3404.15 & $\infty$ & $\infty$ & $\infty$ & $\infty$\\
 & &  & & 1 & $\infty$ & $\infty$ & $\infty$ & $\infty$ & $\infty$ & 1 & $\infty$ & $\infty$ & $\infty$ & $\infty$ & $\infty$ & 1 & $\infty$ & $\infty$ & $\infty$ & $\infty$ & $\infty$\\
 & &  & & 1.05 & $\infty$ & $\infty$ & $\infty$ & $\infty$ & $\infty$ & 1.05 & $\infty$ & $\infty$ & $\infty$ & $\infty$ & $\infty$ & 1.05 & $\infty$ & $\infty$ & $\infty$ & $\infty$ & $\infty$\\
 & &  & & 1.25 & $\infty$ & $\infty$ & $\infty$ & $\infty$ & $\infty$ & 1.25 & 1698.91 & 1713.66 & 1804.44 & 2015.79 & $\infty$ & 1.25 & 455.72 & 504.37 & 525.2 & 533.28 & 852.37\\
 & &  & & 1.5 & $\infty$ & $\infty$ & $\infty$ & $\infty$ & $\infty$ & 1.5 & 1622.52 & 1642.48 & 1816.35 & 1964.49 & 3600.83 & 1.5 & 272.22 & 357.16 & 374.23 & 374.23 & 831.97\\
\hline
\multirow{5}{*}{\textbf{mul64}\cite{yin2025boole}}& \multirow{5}{*}{{183309}} & \multirow{5}{*}{{1045.86}} & \multirow{5}{*}{{2.98}} & ILP & $\infty$ & $\infty$ & $\infty$ & $\infty$ & $\infty$ & ILP & $\infty$ & $\infty$ & $\infty$ & $\infty$ & $\infty$ & ILP & $\infty$ & $\infty$ & $\infty$ & $\infty$ & $\infty$\\
 & &  & & 1 & $\infty$ & $\infty$ & $\infty$ & $\infty$ & $\infty$ & 1 & $\infty$ & $\infty$ & $\infty$ & $\infty$ & $\infty$ & 1 & $\infty$ & $\infty$ & $\infty$ & $\infty$ & $\infty$\\
 & &  & & 1.05 & $\infty$ & $\infty$ & $\infty$ & $\infty$ & $\infty$ & 1.05 & $\infty$ & $\infty$ & $\infty$ & $\infty$ & $\infty$ & 1.05 & $\infty$ & $\infty$ & $\infty$ & $\infty$ & $\infty$\\
 & &  & & 1.25 & $\infty$ & $\infty$ & $\infty$ & $\infty$ & $\infty$ & 1.25 & $\infty$ & $\infty$ & $\infty$ & $\infty$ & $\infty$ & 1.25 & 527.34 & 573.32 & 625.22 & 662.29 & 1206.71\\
 & &  & & 1.5 & $\infty$ & $\infty$ & $\infty$ & $\infty$ & $\infty$ & 1.5 & 2821.94 & 2822.98 & 3307.71 & 3345.4 & $\infty$ & 1.5 & 518.21 & 726.81 & 855.23 & 899.05 & 1944.61\\
\hline
\multirow{5}{*}{\textbf{mul80}\cite{yin2025boole}}& \multirow{5}{*}{{288013}} & \multirow{5}{*}{{1206.88}} & \multirow{5}{*}{{3.83}} & ILP & $\infty$ & $\infty$ & $\infty$ & $\infty$ & $\infty$ & ILP & $\infty$ & $\infty$ & $\infty$ & $\infty$ & $\infty$ & ILP & $\infty$ & $\infty$ & $\infty$ & $\infty$ & $\infty$\\
 & &  & & 1 & $\infty$ & $\infty$ & $\infty$ & $\infty$ & $\infty$ & 1 & $\infty$ & $\infty$ & $\infty$ & $\infty$ & $\infty$ & 1 & $\infty$ & $\infty$ & $\infty$ & $\infty$ & $\infty$\\
 & &  & & 1.05 & $\infty$ & $\infty$ & $\infty$ & $\infty$ & $\infty$ & 1.05 & $\infty$ & $\infty$ & $\infty$ & $\infty$ & $\infty$ & 1.05 & $\infty$ & $\infty$ & $\infty$ & $\infty$ & $\infty$\\
 & &  & & 1.25 & $\infty$ & $\infty$ & $\infty$ & $\infty$ & $\infty$ & 1.25 & $\infty$ & $\infty$ & $\infty$ & $\infty$ & $\infty$ & 1.25 & 527.62 & 590.55 & 768.55 & 826.88 & 1886.97\\
 & &  & & 1.5 & $\infty$ & $\infty$ & $\infty$ & $\infty$ & $\infty$ & 1.5 & $\infty$ & $\infty$ & $\infty$ & $\infty$ & $\infty$ & 1.5 & 1312.57 & 1400.71 & 1586.9 & 1762.83 & 2464.68\\
\hline
\multirow{5}{*}{\textbf{mul96}\cite{yin2025boole}}& \multirow{5}{*}{{416269}} & \multirow{5}{*}{{OOM}} & \multirow{5}{*}{{OOM}} & ILP & $\infty$ & $\infty$ & $\infty$ & $\infty$ & $\infty$ & ILP & $\infty$ & $\infty$ & $\infty$ & $\infty$ & $\infty$ & ILP & $\infty$ & $\infty$ & $\infty$ & $\infty$ & $\infty$\\
 & &  & & 1 & $\infty$ & $\infty$ & $\infty$ & $\infty$ & $\infty$ & 1 & $\infty$ & $\infty$ & $\infty$ & $\infty$ & $\infty$ & 1 & $\infty$ & $\infty$ & $\infty$ & $\infty$ & $\infty$\\
 & &  & & 1.05 & $\infty$ & $\infty$ & $\infty$ & $\infty$ & $\infty$ & 1.05 & $\infty$ & $\infty$ & $\infty$ & $\infty$ & $\infty$ & 1.05 & $\infty$ & $\infty$ & $\infty$ & $\infty$ & $\infty$\\
 & &  & & 1.25 & $\infty$ & $\infty$ & $\infty$ & $\infty$ & $\infty$ & 1.25 & $\infty$ & $\infty$ & $\infty$ & $\infty$ & $\infty$ & 1.25 & 859.12 & 1110.24 & 1454.32 & 1556.26 & 3035.34\\
 & &  & & 1.5 & $\infty$ & $\infty$ & $\infty$ & $\infty$ & $\infty$ & 1.5 & $\infty$ & $\infty$ & $\infty$ & $\infty$ & $\infty$ & 1.5 & 1481.2 & 1974.54 & 2277.46 & 2408.68 & $\infty$\\
\hline
\multirow{5}{*}{\textbf{mul128}\cite{yin2025boole}}& \multirow{5}{*}{{743437}} & \multirow{5}{*}{{OOM}} & \multirow{5}{*}{{OOM}} & ILP & $\infty$ & $\infty$ & $\infty$ & $\infty$ & $\infty$ & ILP & $\infty$ & $\infty$ & $\infty$ & $\infty$ & $\infty$ & ILP & $\infty$ & $\infty$ & $\infty$ & $\infty$ & $\infty$\\
 & &  & & 1 & $\infty$ & $\infty$ & $\infty$ & $\infty$ & $\infty$ & 1 & $\infty$ & $\infty$ & $\infty$ & $\infty$ & $\infty$ & 1 & $\infty$ & $\infty$ & $\infty$ & $\infty$ & $\infty$\\
 & &  & & 1.05 & $\infty$ & $\infty$ & $\infty$ & $\infty$ & $\infty$ & 1.05 & $\infty$ & $\infty$ & $\infty$ & $\infty$ & $\infty$ & 1.05 & $\infty$ & $\infty$ & $\infty$ & $\infty$ & $\infty$\\
 & &  & & 1.25 & $\infty$ & $\infty$ & $\infty$ & $\infty$ & $\infty$ & 1.25 & $\infty$ & $\infty$ & $\infty$ & $\infty$ & $\infty$ & 1.25 & $\infty$ & $\infty$ & $\infty$ & $\infty$ & $\infty$\\
 & &  & & 1.5 & $\infty$ & $\infty$ & $\infty$ & $\infty$ & $\infty$ & 1.5 & $\infty$ & $\infty$ & $\infty$ & $\infty$ & $\infty$ & 1.5 & 2380.14 & 2725.65 & 3111.41 & 3292.51 & 3599.09\\
\hline
\multirow{5}{*}{\textbf{mul32\_map}\cite{yin2025boole}}& \multirow{5}{*}{{73442}} & \multirow{5}{*}{{417.93}} & \multirow{5}{*}{{1.89}} & ILP & 28.77 & 28.77 & 28.77 & 28.77 & 46.69 & ILP & 403.77 & 403.77 & 403.77 & 403.77 & 461.08 & ILP & 79.63 & 81.85 & 82.29 & 82.29 & 82.29\\
 & &  & & 1 & 6.2 & $\infty$ & $\infty$ & $\infty$ & $\infty$ & 1 & 2.67 & $\infty$ & $\infty$ & $\infty$ & $\infty$ & 1 & 34.39 & $\infty$ & $\infty$ & $\infty$ & $\infty$\\
 & &  & & 1.05 & 21.86 & 21.86 & 21.86 & 21.86 & $\infty$ & 1.05 & 19.2 & 19.22 & 22.79 & 22.79 & $\infty$ & 1.05 & 19.64 & 19.71 & 19.71 & 19.71 & $\infty$\\
 & &  & & 1.25 & 20.13 & 20.13 & 20.13 & 20.13 & 26.08 & 1.25 & 22.45 & 22.98 & 27.74 & 27.74 & 28.98 & 1.25 & 20.15 & 20.2 & 20.2 & 20.2 & 20.49\\
 & &  & & 1.5 & 45.95 & 45.95 & 45.95 & 45.95 & 52.4 & 1.5 & 50.99 & 52.09 & 65.52 & 66.78 & 96.65 & 1.5 & 44.28 & 44.87 & 44.87 & 44.93 & 45.87\\
\hline
\multirow{5}{*}{\textbf{mul48\_map}\cite{yin2025boole}}& \multirow{5}{*}{{168940}} & \multirow{5}{*}{{709.44}} & \multirow{5}{*}{{4.32}} & ILP & 286.92 & 286.92 & 286.92 & 286.92 & 305.47 & ILP & $\infty$ & $\infty$ & $\infty$ & $\infty$ & $\infty$ & ILP & 223.53 & 224.22 & 224.22 & 224.22 & 224.7\\
 & &  & & 1 & 24.64 & $\infty$ & $\infty$ & $\infty$ & $\infty$ & 1 & 7.86 & $\infty$ & $\infty$ & $\infty$ & $\infty$ & 1 & 112.46 & $\infty$ & $\infty$ & $\infty$ & $\infty$\\
 & &  & & 1.05 & 120.58 & 120.58 & 120.58 & 120.58 & $\infty$ & 1.05 & 57.52 & 71.02 & 71.02 & 71.02 & $\infty$ & 1.05 & 80.63 & 81.58 & 81.58 & 81.64 & $\infty$\\
 & &  & & 1.25 & 319.75 & 319.75 & 319.75 & 319.75 & 319.75 & 1.25 & 72.08 & 72.08 & 86.78 & 86.78 & 142.95 & 1.25 & 82.03 & 83.17 & 83.17 & 83.28 & 84.36\\
 & &  & & 1.5 & 333.4 & 333.4 & 333.4 & 333.4 & 333.4 & 1.5 & 154.56 & 154.56 & 157.16 & 175.07 & 431.18 & 1.5 & 161.12 & 162.86 & 162.86 & 162.86 & 166.51\\
\hline
\multirow{5}{*}{\textbf{mul64\_map}\cite{yin2025boole}}& \multirow{5}{*}{{303327}} & \multirow{5}{*}{{1206.8}} & \multirow{5}{*}{{6.37}} & ILP & 3081.05 & 3081.05 & 3081.05 & 3081.05 & $\infty$ & ILP & $\infty$ & $\infty$ & $\infty$ & $\infty$ & $\infty$ & ILP & 471.58 & 473.81 & 474.67 & 474.67 & 476.65\\
 & &  & & 1 & 86.19 & $\infty$ & $\infty$ & $\infty$ & $\infty$ & 1 & 21.69 & $\infty$ & $\infty$ & $\infty$ & $\infty$ & 1 & 282.55 & $\infty$ & $\infty$ & $\infty$ & $\infty$\\
 & &  & & 1.05 & 696.22 & 696.22 & 696.22 & 696.22 & $\infty$ & 1.05 & 110.73 & 110.73 & 110.73 & 110.73 & $\infty$ & 1.05 & 200.05 & 202.4 & 202.4 & 202.48 & $\infty$\\
 & &  & & 1.25 & 688.63 & 688.63 & 688.63 & 688.63 & 2964.66 & 1.25 & 177.02 & 177.02 & 177.02 & 215.2 & 399.1 & 1.25 & 194.81 & 197.09 & 197.09 & 197.09 & 197.45\\
 & &  & & 1.5 & 1074.56 & 1074.56 & 1074.56 & 1074.56 & 3448.67 & 1.5 & 305.59 & 305.59 & 305.59 & 305.59 & 818.76 & 1.5 & 430.59 & 430.88 & 430.88 & 430.88 & 443.09\\
\hline
\multirow{5}{*}{\textbf{mul80\_map}\cite{yin2025boole}}& \multirow{5}{*}{{474818}} & \multirow{5}{*}{{OOM}} & \multirow{5}{*}{{OOM}} & ILP & $\infty$ & $\infty$ & $\infty$ & $\infty$ & $\infty$ & ILP & $\infty$ & $\infty$ & $\infty$ & $\infty$ & $\infty$ & ILP & 438.15 & 441.76 & 441.76 & 441.76 & 447.1\\
 & &  & & 1 & 225.83 & $\infty$ & $\infty$ & $\infty$ & $\infty$ & 1 & 45.45 & $\infty$ & $\infty$ & $\infty$ & $\infty$ & 1 & 420.67 & $\infty$ & $\infty$ & $\infty$ & $\infty$\\
 & &  & & 1.05 & $\infty$ & $\infty$ & $\infty$ & $\infty$ & $\infty$ & 1.05 & 394.83 & 394.83 & 505.96 & 505.96 & $\infty$ & 1.05 & 407.05 & 409.68 & 409.68 & 409.68 & $\infty$\\
 & &  & & 1.25 & $\infty$ & $\infty$ & $\infty$ & $\infty$ & $\infty$ & 1.25 & 316.54 & 330.24 & 388.86 & 388.86 & $\infty$ & 1.25 & 392.91 & 395.78 & 398.21 & 398.21 & 402.69\\
 & &  & & 1.5 & $\infty$ & $\infty$ & $\infty$ & $\infty$ & $\infty$ & 1.5 & 589.63 & 589.63 & 663.64 & 663.64 & 1071.17 & 1.5 & 1005.54 & 1014.32 & 1016.99 & 1016.99 & 1016.99\\
\hline
\multirow{5}{*}{\textbf{mul96\_map}\cite{yin2025boole}}& \multirow{5}{*}{{683229}} & \multirow{5}{*}{{OOM}} & \multirow{5}{*}{{OOM}} & ILP & $\infty$ & $\infty$ & $\infty$ & $\infty$ & $\infty$ & ILP & $\infty$ & $\infty$ & $\infty$ & $\infty$ & $\infty$ & ILP & $\infty$ & $\infty$ & $\infty$ & $\infty$ & $\infty$\\
 & &  & & 1 & 630.01 & 630.01 & $\infty$ & $\infty$ & $\infty$ & 1 & 81.46 & 81.46 & $\infty$ & $\infty$ & $\infty$ & 1 & 404.82 & 404.82 & $\infty$ & $\infty$ & $\infty$\\
 & &  & & 1.05 & $\infty$ & $\infty$ & $\infty$ & $\infty$ & $\infty$ & 1.05 & 600.88 & 775.09 & 775.09 & 775.34 & $\infty$ & 1.05 & 1275.57 & 1287.68 & 1288.05 & 1288.05 & $\infty$\\
 & &  & & 1.25 & $\infty$ & $\infty$ & $\infty$ & $\infty$ & $\infty$ & 1.25 & 516.39 & 707.13 & 707.13 & 707.4 & 871.08 & 1.25 & 1303.7 & 1311.77 & 1311.77 & 1311.77 & 1317.32\\
 & &  & & 1.5 & $\infty$ & $\infty$ & $\infty$ & $\infty$ & $\infty$ & 1.5 & 563.09 & 846.15 & 846.15 & 1042.31 & 1042.31 & 1.5 & 1317.71 & 1322.23 & 1322.23 & 1322.23 & 1322.6\\
\hline
\multirow{5}{*}{\textbf{mul128\_map}\cite{yin2025boole}}& \multirow{5}{*}{{1212206}} & \multirow{5}{*}{{OOM}} & \multirow{5}{*}{{OOM}} & ILP & $\infty$ & $\infty$ & $\infty$ & $\infty$ & $\infty$ & ILP & $\infty$ & $\infty$ & $\infty$ & $\infty$ & $\infty$ & ILP & $\infty$ & $\infty$ & $\infty$ & $\infty$ & $\infty$\\
 & &  & & 1 & 3346.03 & 3346.03 & $\infty$ & $\infty$ & $\infty$ & 1 & 175.52 & 175.52 & $\infty$ & $\infty$ & $\infty$ & 1 & 2030.47 & 2030.47 & $\infty$ & $\infty$ & $\infty$\\
 & &  & & 1.05 & $\infty$ & $\infty$ & $\infty$ & $\infty$ & $\infty$ & 1.05 & 3376.35 & 3376.35 & 3376.35 & $\infty$ & $\infty$ & 1.05 & 1965.69 & 1980.49 & 2006.79 & 2015.12 & $\infty$\\
 & &  & & 1.25 & $\infty$ & $\infty$ & $\infty$ & $\infty$ & $\infty$ & 1.25 & $\infty$ & $\infty$ & $\infty$ & $\infty$ & $\infty$ & 1.25 & 2299.12 & 2299.12 & 2299.12 & 2299.12 & 3442.33\\
 & &  & & 1.5 & $\infty$ & $\infty$ & $\infty$ & $\infty$ & $\infty$ & 1.5 & $\infty$ & $\infty$ & $\infty$ & $\infty$ & $\infty$ & 1.5 & 1915.27 & 1917.55 & 1931.28 & 1931.28 & 2031.06\\
\hline
\multirow{5}{*}{\textbf{c2670}\cite{chen2024syn}}& \multirow{5}{*}{{13205}} & \multirow{5}{*}{{34.74}} & \multirow{5}{*}{{0.5}} & ILP & $\infty$ & $\infty$ & $\infty$ & $\infty$ & $\infty$ & ILP & 1042.62 & 1303.82 & 1372.91 & 1488.37 & $\infty$ & ILP & 202.17 & 300.43 & 390.93 & 462.33 & 679.8\\
 & &  & & 1 & $\infty$ & $\infty$ & $\infty$ & $\infty$ & $\infty$ & 1 & $\infty$ & $\infty$ & $\infty$ & $\infty$ & $\infty$ & 1 & $\infty$ & $\infty$ & $\infty$ & $\infty$ & $\infty$\\
 & &  & & 1.05 & 715.49 & $\infty$ & $\infty$ & $\infty$ & $\infty$ & 1.05 & 22.24 & 96.07 & $\infty$ & $\infty$ & $\infty$ & 1.05 & 26.44 & 31.28 & 44.21 & 58.47 & 530.4\\
 & &  & & 1.25 & 1064.72 & 1771.6 & $\infty$ & $\infty$ & $\infty$ & 1.25 & 69.61 & 107.81 & 138.02 & 161.56 & $\infty$ & 1.25 & 33.78 & 62.57 & 116.09 & 156.14 & 1055.68\\
 & &  & & 1.5 & 1967.89 & $\infty$ & $\infty$ & $\infty$ & $\infty$ & 1.5 & 63.8 & 118.94 & 186.31 & 202.92 & $\infty$ & 1.5 & 38.96 & 98.36 & 144.71 & 181.03 & 667.24\\
\hline
\multirow{5}{*}{\textbf{qdiv}\cite{chen2024syn}}& \multirow{5}{*}{{14604}} & \multirow{5}{*}{{55.15}} & \multirow{5}{*}{{0.26}} & ILP & 39.08 & 39.08 & 39.08 & 39.08 & 546.79 & ILP & 99.93 & 99.93 & 99.93 & 99.93 & 100.1 & ILP & 25.62 & 34.59 & 70.11 & 76.68 & 156.78\\
 & &  & & 1 & $\infty$ & $\infty$ & $\infty$ & $\infty$ & $\infty$ & 1 & $\infty$ & $\infty$ & $\infty$ & $\infty$ & $\infty$ & 1 & 5.87 & 6.91 & 10.53 & 26.62 & 157.16\\
 & &  & & 1.05 & 1.13 & $\infty$ & $\infty$ & $\infty$ & $\infty$ & 1.05 & 1.13 & $\infty$ & $\infty$ & $\infty$ & $\infty$ & 1.05 & 12.12 & 16.97 & 20.75 & 27.58 & 136.53\\
 & &  & & 1.25 & 1.25 & 1.25 & 1.25 & 1.25 & 1.57 & 1.25 & 1.04 & 1.18 & 1.28 & 1.28 & 1.66 & 1.25 & 12.6 & 16.83 & 27.31 & 40.46 & 90.28\\
 & &  & & 1.5 & 1.35 & 1.35 & 1.35 & 1.35 & 1.57 & 1.5 & 1.4 & 1.53 & 1.69 & 1.69 & 2.36 & 1.5 & 25.34 & 28.51 & 34.38 & 42.2 & 149.89\\
\hline
\multirow{5}{*}{\textbf{sin}\cite{chen2025emorphicscalableequalitysaturation}}& \multirow{5}{*}{{20390}} & \multirow{5}{*}{{69.39}} & \multirow{5}{*}{{0.65}} & ILP & 247.61 & 247.61 & 247.61 & 247.61 & 3331.98 & ILP & 35.43 & 35.43 & 59.49 & 63.6 & 211.9 & ILP & 12.59 & 27.39 & 55.55 & 66.11 & 390.94\\
 & &  & & 1 & 1.26 & $\infty$ & $\infty$ & $\infty$ & $\infty$ & 1 & 2.02 & $\infty$ & $\infty$ & $\infty$ & $\infty$ & 1 & 7.99 & $\infty$ & $\infty$ & $\infty$ & $\infty$\\
 & &  & & 1.05 & 95.96 & 95.96 & 161.54 & $\infty$ & $\infty$ & 1.05 & 5.59 & 11.91 & 31.77 & $\infty$ & $\infty$ & 1.05 & 56.53 & 117.08 & 365.62 & $\infty$ & $\infty$\\
 & &  & & 1.25 & 60.88 & 60.88 & 77.46 & $\infty$ & $\infty$ & 1.25 & 7.99 & 15.42 & 35.89 & $\infty$ & $\infty$ & 1.25 & 100.07 & 147.61 & 463.21 & $\infty$ & $\infty$\\
 & &  & & 1.5 & 55.13 & 55.13 & 70.33 & $\infty$ & $\infty$ & 1.5 & 6.12 & 17.78 & 43.36 & $\infty$ & $\infty$ & 1.5 & 95.31 & 126.73 & 314.65 & $\infty$ & $\infty$\\
\hline
\multirow{5}{*}{\textbf{log2}\cite{chen2025emorphicscalableequalitysaturation}}& \multirow{5}{*}{{142619}} & \multirow{5}{*}{{169.82}} & \multirow{5}{*}{{0.03}} & ILP & $\infty$ & $\infty$ & $\infty$ & $\infty$ & $\infty$ & ILP & 155.9 & 266.33 & 293.02 & 563.04 & 2044.11 & ILP & 427.7 & 543.95 & 1591.54 & 3169.89 & $\infty$\\
 & &  & & 1 & 1066.47 & $\infty$ & $\infty$ & $\infty$ & $\infty$ & 1 & 34.24 & $\infty$ & $\infty$ & $\infty$ & $\infty$ & 1 & 140.8 & $\infty$ & $\infty$ & $\infty$ & $\infty$\\
 & &  & & 1.05 & $\infty$ & $\infty$ & $\infty$ & $\infty$ & $\infty$ & 1.05 & 60.24 & 210.72 & 283.74 & 479.15 & $\infty$ & 1.05 & 315.45 & 327.57 & 430.61 & 1379.85 & $\infty$\\
 & &  & & 1.25 & $\infty$ & $\infty$ & $\infty$ & $\infty$ & $\infty$ & 1.25 & 58.04 & 183.3 & 214.79 & 426.36 & $\infty$ & 1.25 & 447.42 & 468.9 & 517.53 & 1242.7 & $\infty$\\
 & &  & & 1.5 & $\infty$ & $\infty$ & $\infty$ & $\infty$ & $\infty$ & 1.5 & 120.5 & 253.42 & 379.24 & 503.5 & 1167.54 & 1.5 & 655.04 & 691.49 & 762.72 & 1106.6 & 2244.18\\
\hline
\multirow{5}{*}{\textbf{adder}\cite{chen2025emorphicscalableequalitysaturation}}& \multirow{5}{*}{{29701}} & \multirow{5}{*}{{98.42}} & \multirow{5}{*}{{4.79}} & ILP & $\infty$ & $\infty$ & $\infty$ & $\infty$ & $\infty$ & ILP & 174.0 & 206.06 & 812.98 & 1404.58 & 1731.67 & ILP & 337.65 & 424.55 & 511.69 & 730.48 & 933.25\\
 & &  & & 1 & $\infty$ & $\infty$ & $\infty$ & $\infty$ & $\infty$ & 1 & $\infty$ & $\infty$ & $\infty$ & $\infty$ & $\infty$ & 1 & $\infty$ & $\infty$ & $\infty$ & $\infty$ & $\infty$\\
 & &  & & 1.05 & 44.62 & 44.62 & 44.73 & 56.25 & $\infty$ & 1.05 & 3.92 & 3.92 & 4.87 & 4.87 & $\infty$ & 1.05 & 64.28 & 77.17 & 86.87 & 90.51 & $\infty$\\
 & &  & & 1.25 & 474.78 & 531.76 & 712.21 & 967.25 & $\infty$ & 1.25 & 8.38 & 11.15 & 15.21 & 17.84 & 33.88 & 1.25 & 138.78 & 152.32 & 169.29 & 191.17 & 347.24\\
 & &  & & 1.5 & 298.1 & 671.26 & 906.79 & 1002.94 & $\infty$ & 1.5 & 27.32 & 29.46 & 29.47 & 44.66 & 64.3 & 1.5 & 85.61 & 116.84 & 134.86 & 143.41 & 167.53\\
\hline
\multirow{5}{*}{\textbf{fir\_8\_tap7}\cite{coward2022automatic}}& \multirow{5}{*}{{12224}} & \multirow{5}{*}{{4.03}} & \multirow{5}{*}{{0}} & ILP & $\infty$ & $\infty$ & $\infty$ & $\infty$ & $\infty$ & ILP & 43.37 & 43.37 & 43.37 & 43.37 & 43.37 & ILP & 26.84 & 26.84 & 26.84 & 26.84 & 26.84\\
 & &  & & 1 & 0.69 & 0.69 & 0.69 & 0.69 & 0.69 & 1 & 0.56 & 0.56 & 0.56 & 0.56 & 0.56 & 1 & 0.93 & 0.93 & 0.93 & 0.93 & 0.93\\
 & &  & & 1.05 & 1.0 & 1.0 & 1.0 & 1.0 & 1.0 & 1.05 & 0.12 & 0.12 & 0.12 & 0.12 & 0.12 & 1.05 & 4.19 & 4.19 & 4.19 & 4.19 & 4.19\\
 & &  & & 1.25 & 0.83 & 0.83 & 0.83 & 0.83 & 0.83 & 1.25 & 0.19 & 0.19 & 0.19 & 0.19 & 0.19 & 1.25 & 13.1 & 13.1 & 13.1 & 13.1 & 13.1\\
 & &  & & 1.5 & 0.85 & 0.85 & 0.85 & 0.85 & 0.85 & 1.5 & 0.08 & 0.08 & 0.08 & 0.08 & 0.08 & 1.5 & 11.21 & 11.21 & 11.21 & 11.21 & 11.21\\
\hline
\multirow{5}{*}{\textbf{direct\_root\_18}\cite{vanhattum2021vectorization}}& \multirow{5}{*}{{218933}} & \multirow{5}{*}{{41.42}} & \multirow{5}{*}{{0}} & ILP & 144.38 & 144.38 & 144.38 & 144.38 & 144.38 & ILP & 180.72 & 180.72 & 180.72 & 180.72 & 180.72 & ILP & 92.22 & 92.22 & 92.22 & 92.22 & 92.22\\
 & &  & & 1 & 2.7 & 2.7 & 2.7 & 2.7 & 2.7 & 1 & 1.08 & 1.08 & 1.08 & 1.08 & 1.08 & 1 & 4.53 & 4.53 & 4.53 & 4.53 & 4.53\\
 & &  & & 1.05 & 2.38 & 2.38 & 2.38 & 2.38 & 2.38 & 1.05 & 1.05 & 1.05 & 1.05 & 1.05 & 1.05 & 1.05 & 3.82 & 3.82 & 3.82 & 3.82 & 3.82\\
 & &  & & 1.25 & 3.54 & 3.54 & 3.54 & 3.54 & 3.54 & 1.25 & 1.19 & 1.19 & 1.19 & 1.19 & 1.19 & 1.25 & 3.78 & 3.78 & 3.78 & 3.78 & 3.78\\
 & &  & & 1.5 & 2.39 & 2.39 & 2.39 & 2.39 & 2.39 & 1.5 & 1.11 & 1.11 & 1.11 & 1.11 & 1.11 & 1.5 & 4.21 & 4.21 & 4.21 & 4.21 & 4.21\\
\hline
\multirow{5}{*}{\textbf{vector\_conv\_2x2}\cite{vanhattum2021vectorization}}& \multirow{5}{*}{{3764}} & \multirow{5}{*}{{3.78}} & \multirow{5}{*}{{0}} & ILP & 0.13 & 0.13 & 0.13 & 0.13 & 0.13 & ILP & 0.0 & 0.0 & 0.0 & 0.0 & 0.0 & ILP & 0.1 & 0.1 & 0.1 & 0.1 & 0.1\\
 & &  & & 1 & 0.03 & 0.03 & 0.03 & 0.03 & 0.03 & 1 & 0.02 & 0.02 & 0.02 & 0.02 & 0.02 & 1 & 0.04 & 0.04 & 0.04 & 0.04 & 0.04\\
 & &  & & 1.05 & 0.03 & 0.03 & 0.03 & 0.03 & 0.03 & 1.05 & 0.02 & 0.02 & 0.02 & 0.02 & 0.02 & 1.05 & 0.04 & 0.04 & 0.04 & 0.04 & 0.04\\
 & &  & & 1.25 & 0.03 & 0.03 & 0.03 & 0.03 & 0.03 & 1.25 & 0.02 & 0.02 & 0.02 & 0.02 & 0.02 & 1.25 & 0.04 & 0.04 & 0.04 & 0.04 & 0.04\\
 & &  & & 1.5 & 0.03 & 0.03 & 0.03 & 0.03 & 0.03 & 1.5 & 0.02 & 0.02 & 0.02 & 0.02 & 0.02 & 1.5 & 0.04 & 0.04 & 0.04 & 0.04 & 0.04\\
\hline
\multirow{5}{*}{\textbf{large\_mul2048}\cite{ustun2022impress}} & \multirow{5}{*}{{102030}} & \multirow{5}{*}{{20.12}} & \multirow{5}{*}{{1.77}} & ILP & 19.86 & 19.86 & 19.86 & 19.86 & 19.86 & ILP & 0.2 & 0.2 & 0.2 & 0.2 & 31.49 & ILP & 15.54 & 15.54 & 15.54 & 15.54 & 15.54\\
 & &  & & 1 & 0.99 & 0.99 & 0.99 & 0.99 & 0.99 & 1 & 0.41 & 0.41 & 0.41 & 0.41 & 0.41 & 1 & 3.42 & 3.42 & 3.42 & 3.42 & 3.42\\
 & &  & & 1.05 & 1.17 & 1.17 & 1.17 & 1.17 & 1.17 & 1.05 & 0.48 & 0.48 & 0.48 & 0.48 & 0.48 & 1.05 & 4.54 & 4.54 & 4.54 & 4.54 & 4.54\\
 & &  & & 1.25 & 1.09 & 1.09 & 1.09 & 1.09 & 1.09 & 1.25 & 0.46 & 0.46 & 0.46 & 0.46 & 0.46 & 1.25 & 6.50 & 6.50 & 6.50 & 6.50 & 6.50\\
 & &  & & 1.5 & 1.1 & 1.1 & 1.1 & 1.1 & 1.1 & 1.5 & 0.46 & 0.46 & 0.46 & 0.46 & 0.46 & 1.5 & 10.32 & 10.32 & 10.32 & 10.32 & 10.32\\
\hline
\multirow{5}{*}{\textbf{nasneta}\cite{yang2021equality}}& \multirow{5}{*}{{43072}} & \multirow{5}{*}{{37.44}} & \multirow{5}{*}{{0.15}} & ILP & 16.16 & 23.11 & 23.11 & 23.11 & 27.58 & ILP & 26.1 & 26.1 & 26.1 & 26.1 & 37.38 & ILP & 39.57 & 42.93 & 60.62 & 69.72 & 69.72\\
 & &  & & 1 & $\infty$ & $\infty$ & $\infty$ & $\infty$ & $\infty$ & 1 & $\infty$ & $\infty$ & $\infty$ & $\infty$ & $\infty$ & 1 & $\infty$ & $\infty$ & $\infty$ & $\infty$ & $\infty$\\
 & &  & & 1.05 & 3.52 & $\infty$ & $\infty$ & $\infty$ & $\infty$ & 1.05 & 1.61 & $\infty$ & $\infty$ & $\infty$ & $\infty$ & 1.05 & 9.71 & $\infty$ & $\infty$ & $\infty$ & $\infty$\\
 & &  & & 1.25 & 3.41 & $\infty$ & $\infty$ & $\infty$ & $\infty$ & 1.25 & 1.17 & $\infty$ & $\infty$ & $\infty$ & $\infty$ & 1.25 & 9.32 & $\infty$ & $\infty$ & $\infty$ & $\infty$\\
 & &  & & 1.5 & 4.34 & $\infty$ & $\infty$ & $\infty$ & $\infty$ & 1.5 & 1.05 & $\infty$ & $\infty$ & $\infty$ & $\infty$ & 1.5 & 8.53 & $\infty$ & $\infty$ & $\infty$ & $\infty$\\
\hline
\end{tabular}
}
\label{tab:comprehensive_convergence}
\end{table*}

\begin{table*}[ht]
    \scriptsize
    \begin{minipage}[t]{0.48\textwidth}
        \centering
        \caption{Area Comparison with ASAP 7nm library 
        }
        \label{tab:area_comparison1}
\begin{tabular}{l|ccc}
\hline
Benchmark & Ours & E-Syn & ABC \\
\hline
adder & 927.75 & 926.59 & \textbf{\textcolor{mygreen}{882.50}} \\
b12 & 836.78 & 917.02 & \textbf{\textcolor{mygreen}{807.15}} \\
bar & 2865.61 & 2686.22 & \textbf{\textcolor{mygreen}{2666.62}} \\
c432 & \textbf{\textcolor{mygreen}{206.45}} & 233.05 & 231.88 \\
c2670 & \textbf{\textcolor{mygreen}{650.38}} & 880.63 & 700.31 \\
c5315 & \textbf{\textcolor{mygreen}{1380.32}} & 1518.19 & 1391.05 \\
c7552 & \textbf{\textcolor{mygreen}{2022.77}} & 2297.11 & 2115.15 \\
cavlc & \textbf{\textcolor{mygreen}{494.79}} & 521.38 & 500.62 \\
frg2 & \textbf{\textcolor{mygreen}{675.11}} & 703.11 & 765.16 \\
i7 & \textbf{\textcolor{mygreen}{415.01}} & 500.39 & 671.15 \\
max & \textbf{\textcolor{mygreen}{2262.82}} & 2820.36 & 2326.27 \\
qdiv & \textbf{\textcolor{mygreen}{1386.85}} & 1469.66 & 1420.44 \\
sqrt & \textbf{\textcolor{mygreen}{20322.89}} & n/a & 20578.56 \\
sin & \textbf{\textcolor{mygreen}{5220.11}} & 5437.99 & 5223.84 \\
log2 & 25969.9 & 25969.90 & \textbf{\textcolor{mygreen}{25933.74}} \\
multiplier & \textbf{\textcolor{mygreen}{23929.63}} & 25037.01 & 24787.87 \\
mem\_ctrl & \textbf{\textcolor{mygreen}{32379.73}} & 32383.70 & 32383.93 \\
square & 15214.75 & 16809.92 & \textbf{\textcolor{mygreen}{15200.06}} \\
avg. improvements & & \textbf{\textcolor{mygreen}{7.6\%}} & \textbf{\textcolor{mygreen}{3.8\%}} \\
\hline
\end{tabular}
    \end{minipage}
    \hfill
    \begin{minipage}[t]{0.48\textwidth}
        \centering
        \caption{Area Comparison with Skywater130 library  
        \cite{gdsfactory_skywater130_2025}}
        \label{tab:area_comparison2}
\begin{tabular}{l|ccc}
\hline
Benchmark & Ours & E-Syn & ABC \\
\hline
adder & \textbf{\textcolor{mygreen}{4903.45}} & 5256.29 & 5071.11 \\
b12 & \textbf{\textcolor{mygreen}{3560.92}} & 4172.75 & 3696.04 \\
bar & \textbf{\textcolor{mygreen}{10256.09}} & 10301.13 & 10996.80 \\
c432 & 1063.52 & 1093.55 & \textbf{\textcolor{mygreen}{993.45}} \\
c2670 & \textbf{\textcolor{mygreen}{2761.4}} & 3827.42 & 2910.29 \\
c5315 & \textbf{\textcolor{mygreen}{5954.46}} & 7170.63 & 6088.34 \\
c7552 & \textbf{\textcolor{mygreen}{7895.07}} & 9265.14 & 8221.63 \\
cavlc & \textbf{\textcolor{mygreen}{2209.62}} & 2524.92 & 2285.94 \\
frg2 & \textbf{\textcolor{mygreen}{3145.52}} & 3394.51 & 3609.71 \\
i7 & \textbf{\textcolor{mygreen}{2132.04}} & 2347.25 & 2140.80 \\
max & \textbf{\textcolor{mygreen}{11357.14}} & 11737.51 & 11647.42 \\
qdiv & \textbf{\textcolor{mygreen}{5904.41}} & 6248.49 & 6175.92 \\
sqrt & \textbf{\textcolor{mygreen}{124052.73}} & n/a & 127858.88 \\
sin & \textbf{\textcolor{mygreen}{22013.61}} & 23585.12 & 22291.38 \\
log2 & 111319.27 & 111319.27 & \textbf{\textcolor{mygreen}{110961.42}} \\
multiplier & \textbf{\textcolor{mygreen}{104352.58}} & 105829.00 & 106354.50 \\
mem\_ctrl & \textbf{\textcolor{mygreen}{146022.55}} & 146038.81 & 146043.81 \\
square & \textbf{\textcolor{mygreen}{66934.2}} & 72757.28 & 68964.89 \\
avg. improvements & & \textbf{\textcolor{mygreen}{8.1\%}} & \textbf{\textcolor{mygreen}{2.8\%}} \\
\hline
\end{tabular}
    \end{minipage}
    \label{fig:area_comparisons}
\end{table*}

\begin{figure*}[htbp]
   \centering
   \hfill
   \begin{subfigure}[t]{0.45\linewidth}
       \centering
       \includegraphics[width=\textwidth]{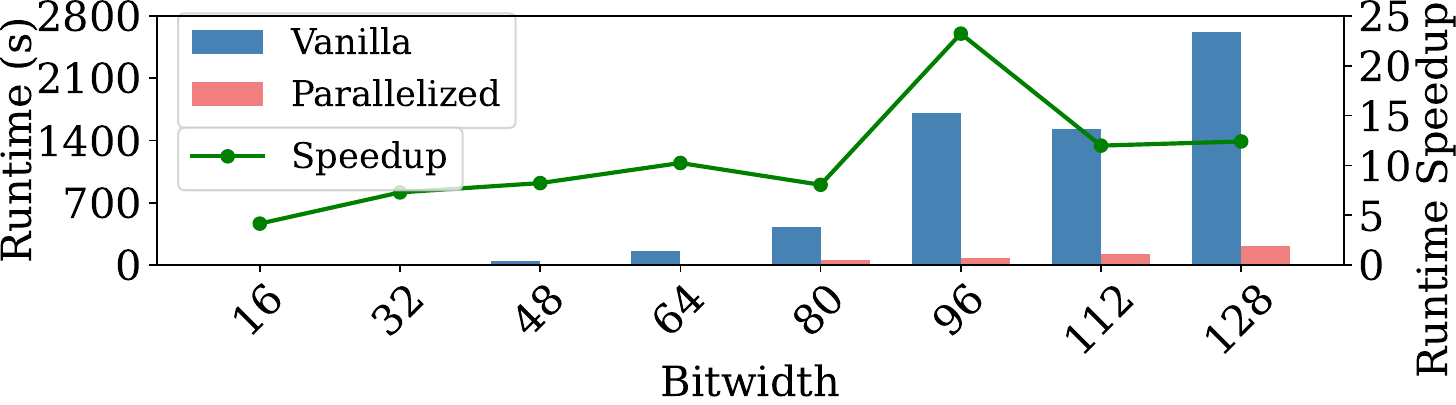}
       \label{fig:map_runtime}
   \end{subfigure}%
   \hfill
   \begin{subfigure}[t]{0.45\linewidth}
       \centering
       \includegraphics[width=\textwidth]{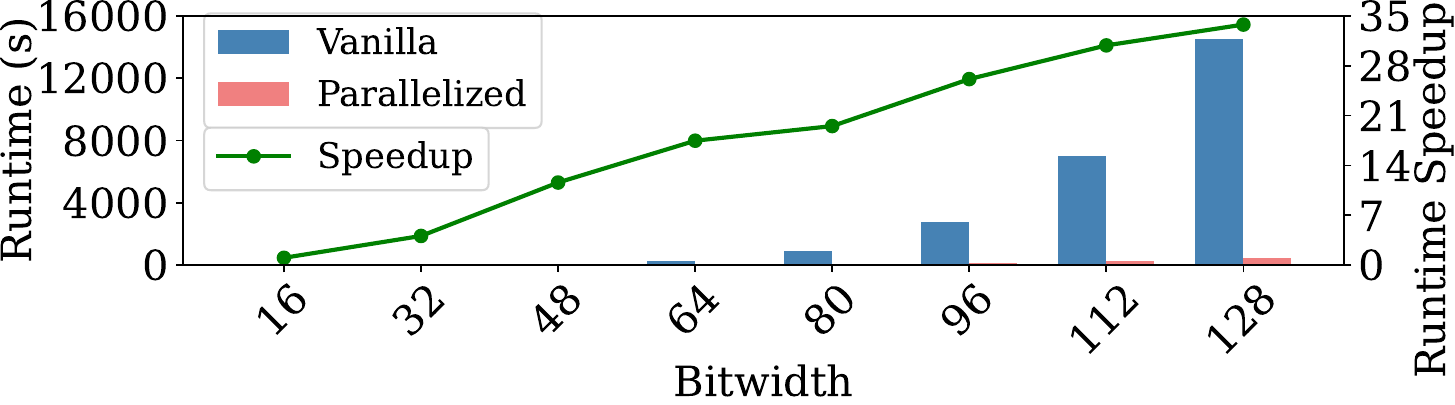}
       \label{fig:nonmap_runtime}
   \end{subfigure}%
   \hfill
   \vspace{-4mm}
   \caption{Runtime speedup comparison between vanilla and parallelized extraction. \name achieves higher speedup as input size grows. Left: non-mapped CSA multiplier; Right: mapped CSA multiplier}
   \label{fig:parallelized_performance}
   \vspace{-5mm}
\end{figure*}

Our results reveal several key insights. \textbf{First, \name demonstrates superior scalability and convergence speed.} It is the only viable approach for the largest benchmarks (BoolE\_mul96, BoolE\_mul128, etc.), where both vanilla ILP and SmoothE fail to produce any valid solution within the timeout period. Quantitatively, \name achieves $138\times$ and $558\times$ runtime speedups relative to SmoothE and vanilla exact solvers, respectively, when producing results of comparable quality. \textbf{Second, \name delivers high solution quality across diverse benchmarks.} It achieves 19.04\% and 5.6\% performance improvement compared to SmoothE and vanilla ILP solvers. For benchmarks where ILP guarantees optimality, \name maintains remarkable accuracy with only a 0.21\% optimality gap. \textbf{Finally, we observe that threshold selection represents an important trade-off parameter.} Overly aggressive threshold values can compromise solution quality by pruning optimal solutions from the search space. For instance, in the nasneta benchmark, \name fails to find optimal solutions across all threshold values ($\infty$ in Table I) because the global optimum is pruned from the search space. Across our evaluation of 92 benchmarks, \name achieves optimal results in 83 cases, with only 9 benchmarks experiencing global optima pruning. Notably, even when the global optimum is pruned, search space reduction still enables faster early-stage convergence. 


\subsection{Performance of Realistic e-graph applications}

\noindent
\textbf{RQ2: Can \name bring realistic improvement on downstream tasks?} To assess the practical impact of \name beyond abstract e-graph extraction benchmarks, we evaluated its performance within a realistic backend application: logic synthesis area optimization.

E-Syn inserts e-graph optimization at the beginning of the logic optimization flow, applying e-graph rewriting and extraction before feeding the optimized result back to ABC for technology mapping. In this experiment, we integrated e-boost into the synthesis flow of the E-Syn framework, replacing the original e-graph extraction engine with our proposed method while maintaining the same integration points with ABC. This allowed us to directly measure the influence of \name extraction approach on the final circuit area after technology mapping, and also eliminate the needs of extensive tuning for conventional logic synthesis techniques \cite{yu2020flowtune,neto2022flowtune}. We compared the results obtained using the \name enhanced flows (labeled "Ours") against two baselines: the widely-used academic logic synthesis tool ABC, and the results from the original E-Syn flow utilizing its original extraction method (labeled "E-Syn"). Note that we only performed technology-independent optimization to optimize the AIG structure, while all technology mapping operations were uniformly conducted using ABC commands to ensure fair comparison, which could be further optimized via mapper optimizations \cite{liu2024maptune,liu2025maptune,neto2021slap}. The test results are presented in Tables \ref{tab:area_comparison1} and \ref{tab:area_comparison2}, which detail the post-mapping circuit area obtained when targeting the ASAP 7nm \cite{clark2016asap7} and Skywater130 \cite{gdsfactory_skywater130_2025} technology libraries, respectively.

The results demonstrate that \name delivers substantial area improvements across both technology mapping libraries. \textbf{Overall, our approach provides 7.6\% and 8.1\% area reductions compared to the original E-Syn extractor for ASAP 7nm and Skywater130 libraries, respectively.} For critical benchmarks such as c2670, our method achieves 26.1\% area reduction (650.38 vs. 880.63) with ASAP 7nm and 27.9\% reduction (2761.40 vs. 3827.42) with Skywater130. For all tested benchmarks, \name consistently produces more compact circuit implementations than the original E-Syn approach, demonstrating that improved e-graph extraction directly translates to meaningful downstream QoR improvements. {Note that final outcomes depend on specific technology library characteristics and ABC's mapping algorithms, explaining why some benchmarks may not outperform the original ABC flow.}


\subsection{Runtime comparison of parallelized extraction}

\noindent
\textbf{RQ3: How does parallelized extraction impact runtime performance for heuristic initialization?} Figure \ref{fig:parallelized_performance} demonstrates the substantial performance improvements achieved by our parallelized extraction algorithm across BoolE benchmarks of varying bitwidths for both technology-mapped and non-technology-mapped CSA multiplier e-graphs. For non-mapped multipliers (left graph), our approach delivers speedups ranging from $4.2\times$ at 32-bit to a peak of $23.2\times$ at 96-bit. The benefits become even clearer for technology-mapped multipliers (right graph), where parallelization achieves runtime reductions—from $4.1\times$ speedup at 32-bit to $33.8\times$ at 128-bit. For example, the vanilla implementation requires 14,535 seconds for 128-bit mapped multipliers, while our parallelized version completes the same extraction in just 430 seconds. For small benchmarks (16-bit), the parallelization overhead slightly outweighs the benefits, but as problem size increases, our approach delivers substantial runtime reductions with speedups of $12.4\times$ for 128-bit CSA multiplier and $33.8\times$ for 128-bit mapped CSA multiplier.




\section{Conclusion}

In this paper, we focused on the challenge of e-graph extraction, an \textit{NP-hard} combinatorial optimization that bottlenecks equality saturation in the fields of logic synthesis and formal verification. Our \name framework balances speed and optimality via key innovations: parallelized heuristic extraction for multithreading, adaptive search-space pruning with a parameterized cost threshold, and initialized exact solving using warm-start ILP. Experiments on diverse benchmarks show \name achieves up to $558\times$ runtime speedup over vanilla ILP and 19.04\% performance gain against state-of-the-art SmoothE, achieving 7.6\% and 8.1\% area reductions in downstream logic synthesis task across two technology libraries.

\textbf{Acknowledgemnt} -- 
This work is sponsored by the National Science Foundation under \#2403134, \#2403135, \#2349670, \#2349461, \#2229562, and DARPA Grant No. HR001125C0058 and supported in part by ACE, one of the seven centers in JUMP 2.0.

\small
\bibliographystyle{plain}
\bibliography{reference}
\end{document}